    \newcommand{\Prob}{\mathbb{P}}
    \newcommand{\Ber}{\mathrm{Bernoulli}}
    \newcommand{\defeq}{\mathrel{\mathop{:}}=}
    \newcommand{\g}{\gamma}
    \newcommand{\e}{\epsilon}
    \newcommand{\Reg}{\mathrm{Reg}}
    \newcommand{\wcalg}{\mathsf{ALG}_{\mathsf{WC}}}
    \newcommand{\zlast}{z_{\mathrm{last}}}
    \DeclareMathOperator\E{\mathbb{E}}
    \newcommand{\ftl}{\mathsf{FTL}}
    \newcommand{\cover}{\mathsf{Cover}}
    \newcommand{\Ac}{\mathcal{A}}
    \newcommand{\alg}{\mathsf{ALG}}
    \newcommand{\aftl}{a^{\mathsf{FTL}}}
    \newcommand{\ftlsum}{\Sigma^{\mathsf{FTL}}}
    \newcommand{\bob}{\mathsf{SMART}}
    \newcommand{\Majority}{\mathsf{Majority}}
    \newcommand{\Cover}{\mathsf{Cover}}
\newtheorem*{thm}{Theorem}
\author{%
Siddhartha Banerjee \\ORIE, Cornell\\\texttt{sbanerjee@cornell.edu} \\
 \and
 Alankrita Bhatt \\CMS, Caltech\\\texttt{abhatt@caltech.edu}\\
\and
 Christina Lee Yu\\ORIE, Cornell\\ \texttt{cleeyu@cornell.edu}
}
\newcommand{\Secref}[1]{\hyperref[#1]{Section \ref*{#1}}}
\newcommand{\Appref}[1]{\hyperref[#1]{Appendix \ref*{#1}}}
\crefname{equation}{}{}
\crefname{lemma}{Lemma}{Lemmas}
\crefname{section}{Section}{Sections}
\crefname{subsubsubsection}{Section}{Sections}
\crefname{remark}{Remark}{Remarks}
\crefname{figure}{Figure}{Figures}
\crefname{table}{Table}{Tables}
\Crefname{lemma}{Lemma}{Lemmas}
\crefname{theorem}{Theorem}{Theorems}
\Crefname{theorem}{Theorem}{Theorems}
    \newtheorem{theorem}{Theorem}
    \newtheorem{corollary}{Corollary}
    \newtheorem{remark}{Remark}
    \newtheorem{example}{Example}
    \newtheorem{definition}{Definition}
    \newtheorem{proposition}{Proposition}
    \newtheorem{lemma}{Lemma}
\title{\textbf{The $\mathsf{SMART}$ Approach to Instance-Optimal Online Learning}}
\date{\vspace{-1em}}
\begin{document}

\maketitle
\begin{abstract}%
 We devise an online learning algorithm -- titled \emph{Switching via Monotone Adapted Regret Traces} ($\bob$) -- that adapts to the data and achieves regret that is \emph{instance optimal}, i.e., simultaneously competitive on every input sequence compared to the performance of the follow-the-leader ($\ftl$) policy and the worst case guarantee of any other input policy $\wcalg$.  
We show that the regret of the $\bob$ policy on \emph{any} input sequence is within a multiplicative factor $e/(e-1) \approx 1.58$ of the smaller of: 1) the regret obtained by $\ftl$ on the sequence, and 2) the upper bound on regret guaranteed by the given worst-case policy. This implies a strictly stronger guarantee than typical `best-of-both-worlds' bounds as the guarantee holds for every input sequence regardless of how it is generated. $\bob$ is simple to implement as it begins by playing $\ftl$ and switches at most once during the time horizon to $\wcalg$. Our approach and results follow from an operational reduction of instance optimal online learning to competitive anaylsis for the ski-rental problem. 
We complement our competitive ratio upper bounds with a fundamental lower bound showing that over all input sequences, no algorithm can get better than a $1.43$-fraction of the minimum regret achieved by $\ftl$ and the minimax-optimal policy.
We also present a modification of $\bob$ that combines $\ftl$ with a ``small-loss" algorithm to achieve instance optimality between the regret of $\ftl$ and the small loss regret bound. 
\end{abstract}
\section{Introduction} 
\label{sec:intro}

Our work aims to develop algorithms for online learning that are \emph{instance optimal}~\citep{fagin2001optimal},\citep[Chapter $3$]{roughgarden2021beyond} with respect to the stochastic and minimax optimal algorithms for a given setting. This is best motivated via
a concrete example:
\begin{example}[Binary Prediction]
We are given bit stream $y^n \defeq y_1,y_2,\ldots,y_n\in\{0,1\}^n$.
At the start of day $t$, before seeing $y_t$, we choose (possibly randomized) prediction $\widehat{Y}_t \sim \Ber(a_t)$ (for $a_t \in [0,1]$) for the upcoming bit $y_t$, given the history $y^{t-1}$. Our resulting loss on day $t$ is $\ell_t(a_t) = \Prob(\widehat{Y}_t \neq y_t) = |a_t - y_t|$, and our total loss is $L_n(\alg,y^n)\defeq \sum_{t=1}^n\ell_t(a_t)$. The objective is to achieve low \emph{regret} (i.e., additive loss) compared to the loss $L_n(a,y^n)=\sum_{t=1}^n\ell_t(a)$ of the best fixed action $a^* \in[0,1]$ in hindsight. As $a^*$ is the majority in $y^n$ between 0 and 1, it follows that $L_n(a^*, y^n) = \min\left\{\sum_{t=1}^n y_t, n - \sum_{t=1}^n y_t\right\}$. Formally, for sequence $y^n \in \{0,1\}^n$, policy $\alg$ incurs regret
\begin{align}
\label{eq:regdefn}
    \Reg(\alg, y^n) &\defeq L_n(\alg,y^n) - L_n(a^*,y^n)
    = \textstyle\sum_{t=1}^n |a_t - y_t| - \min_{a\in[0,1]}\textstyle\sum_{t=1}^T|a-y_t|.
\end{align}
\end{example}
Binary prediction goes back to the seminal works of~\cite{blackwell1956analog} and~\cite{hannan1957approximation}. The definition of regret is motivated by the case where $y_t$ is \emph{randomly} generated as i.i.d. Bernoulli$(p)$. If $p$ is known, then the optimal policy is the `Bayes predictor' $a^{\textsf{Bayes}}=\lfloor 2p\rfloor$ (i.e., nearest integer to $p$),
which coincides with hindsight optimal $a^*$ with high probability when $p$ is away from $1/2$. When $p$ is unknown, the \emph{stochastic optimal} policy is the \emph{Follow The Leader} or $\ftl$ policy, which sets $a_t=\Majority(y^{t-1})$, i.e. the majority bit amongst the first $t-1$ bits ($a_t=1/2$ if both are equal\footnote{We choose this specific tie-breaking rule for convenience; however, we can take any $a_t\in[0,1]$.}). 

A starting point for online learning is the observation that it is easy to construct a sequence $y^n$ such that $\ftl$ has poor regret: For example, if $y^n = (1,0,1,0,1,0,\ldots)$, i.e., alternate $1$s and $0$s, then the regret of $\ftl$ grows linearly with $n$. In contrast,~\emph{worst-case optimal} online learning policies such as those of Blackwell and Hannan, or more modern versions like Multiplicative Weights or Follow The Perturbed Leader (see~\cite{Cesa-Bianchi--Lugosi2006, slivkins2019introduction}) guarantee regret of $\Theta(\sqrt{n})$ over all sequences. Indeed, for bit prediction, the \emph{exact} minimax optimal policy was established by~\cite{Cover1966}, and this policy (which we refer to as $\cover$) achieves\footnote{Here $f_n$, the so-called Rademacher complexity of the setting, is a \emph{fixed} function of $n$ that does not depend on sequence $y^n$. For binary prediction, $f_n = \frac{\E|\sum_{t=1}^n Z_t|}{2} \approx \sqrt{\frac{n}{2\pi}}$ where $Z^n \sim \mathrm{Unif}\{1,-1\}$ i.i.d.}  $\Reg(\cover, y^n) = \sqrt{\frac{n}{2\pi}}(1+o(1))$ under \emph{any} $y^n \in\{0,1\}^n$, implying it is an \emph{equalizer} (achieves same regret over all sequences). 

While the above discussion seems a convincing endorsement of worst-case online learning algorithms, the situation is more complicated. One problem is that while $\ftl$ has bad regret on certain pathological sequences, on more `realistic' sequences $\ftl$ performs orders of magnitude better than the minimax regret. As an example, with i.i.d. Bernoulli$(p)$ input, $\Reg(\ftl, y^n)$ is actually \emph{independent of $n$} (i.e., $O(1)$) as long as $p$ is away from $1/2$ with high probability. We demonstrate this in~\cref{fig:SMART}$(a)$, where we see $\Reg(\ftl)$ is much lower than $\Reg(\cover)\approx 0.39\sqrt{n}$ unless $p$ is very close to $1/2$. This phenomena is known in more general settings~\citep{Huang--Lattimore--Gyorgy--Szapesvari2016}, suggesting that in practice one may be better off just using $\ftl$. On the other hand, as~\cref{fig:SMART}$(b,c)$ indicates, we know how to generate sequences $y^n$~\citep{feder1992universal} for which $\Reg(\ftl,y^n)$ grows linearly with $n$, and so the $\sqrt{n}$ regret of $\cover$ becomes appealing. 

Now suppose instead that a fictitious oracle is told beforehand which of $\ftl$ or $\cover$ is better suited for the upcoming sequence $y^n$; the demand made by \emph{instance optimality} is that we try to be competitive against such an oracle \emph{on every sequence $y^n$}.
\begin{definition}[Instance Optimality]\label{def:instanceopt_binary}
A binary prediction policy $\alg$ is instance optimal with respect to the regret of $\ftl$ and $\cover$ if there exists some universal $\gamma_n \geq 1$ such that for all $y^n \in \{0,1\}^n$:
\begin{equation*}
\label{eq:instanceopt_binary}
\small \Reg(\alg,y^n)\leq \gamma_n \min\{\Reg(\ftl,y^n),\Reg(\cover,y^n)\}
\end{equation*}
\end{definition}
We henceforth refer to $\gamma_n$ as the competitive ratio achieved by $\alg$; ideally we want this ratio to be a constant, i.e., $\gamma_n = O(1)$. 
This necessitates that on sequences where $\ftl$ gets a constant regret, then $\alg$ basically follows $\ftl$ throughout, while on sequences where $\ftl$ has high (in particular,~$\omega(\sqrt{n})$) regret, then $\alg$ follows $\cover$ in most rounds. 

The challenge in designing instance optimal algorithms is that the regret of any algorithm is a quantity that is not adapted to the natural filtration, i.e. it may not be possible to track $\Reg(\alg,y^n)$ for any $\alg$ from just the history $(y_1,y_2,\ldots,y_{t-1})$, since the hindsight optimal action $a^*$ depends on the \emph{entire} sequence $y^n$. 
One proxy is to track an algorithm's loss instead, leading to the idea of `corralling' policies~\citep{agarwal2017corralling,pacchiano2020model,Dann--Wei--Zimmert2023}, that run online learning over the reference algorithms to get within  $O(\text{poly}(n))$ of the smaller of the two losses.
Such an approach can not ensure $\gamma_n=O(1)$:
for example, consider an i.i.d. sequence of Bernoulli$(0.1)$ bits, where $\ftl$ has lower regret than $\cover$. With high probability on any such sequence we have small $\Reg(\ftl,y^n)=O(1)$ and yet high loss $L_n(a^*,y^n)=\Theta(n)$; now any corralling algorithm (even a small loss one) must suffer $O(\text{poly}(n))$ regret, and hence $\omega(1)$ competitive ratio. This example also shows that achieving a constant factor guarantee with respect to the minimum of the two \emph{losses} does not translate to a constant factor guarantee with respect to the minimum of the two \emph{regrets}. 

The instance optimal guarantee is closely related to \emph{best-of-both-worlds} guarantees, which aim for algorithms that simultaneously achieve (up to constant factors) both the low \emph{pseudoregret} guarantee of policies designed for stochastic inputs (as with $\ftl$ in our setting, or the Upper Confidence Bound ($\mathsf{UCB}$) algorithm in bandits), as well as a per-sequence regret guarantee comparable to a worst-case optimal algorithm $\wcalg$ (Eg. $\cover$ or Hedge in online learning; $\mathsf{EXP}3$ in bandits~\cite{auer2002nonstochastic}). 
Such guarantees have been shown in a variety of settings, including online learning~\citep{de2014follow,Orabona--Pal2015,mourtada2019optimality,bilodeau2023relaxing} and bandit settings~\citep{bubeck2012best,Zimmert--Seldin2019,Lykouris--Mirrokni--PaesLeme2018,Dann--Wei--Zimmert2023}.
One problem though is that since pseudoregret and worst-case regret are very different quantities, the above results tend to be hard to interpret, and less predictive of good performance\footnote{As an example, Hedge has optimal pseudoregret in certain stochastic settings~\citep{mourtada2019optimality}, but this is known to be sensitive to perturbations in the distributions~\citep{bilodeau2023relaxing}.}.
Note though that given a pair of stochastic/worst-case optimal algorithms, a policy that is $\gamma$-\emph{instance-optimal} w.r.t. these immediately satisfies a {best-of-both-worlds} guarantee with constant factor $\gamma$. 
In this regard, instance optimality provides a stronger guarantee as it holds on \emph{every} sequence $y^n$ regardless of how it is generated. 
Moreover, the parameter $\gamma$ can also provide sharper comparisons between algorithms, as well as admit hardness results on the limits of such guarantees.

\subsection{Our Contributions} \label{ssec:mainresults}

\begin{figure}[!t]
  \centering
  \begin{minipage}{.3\textwidth} 
    \centering
    \includegraphics[height=\linewidth]{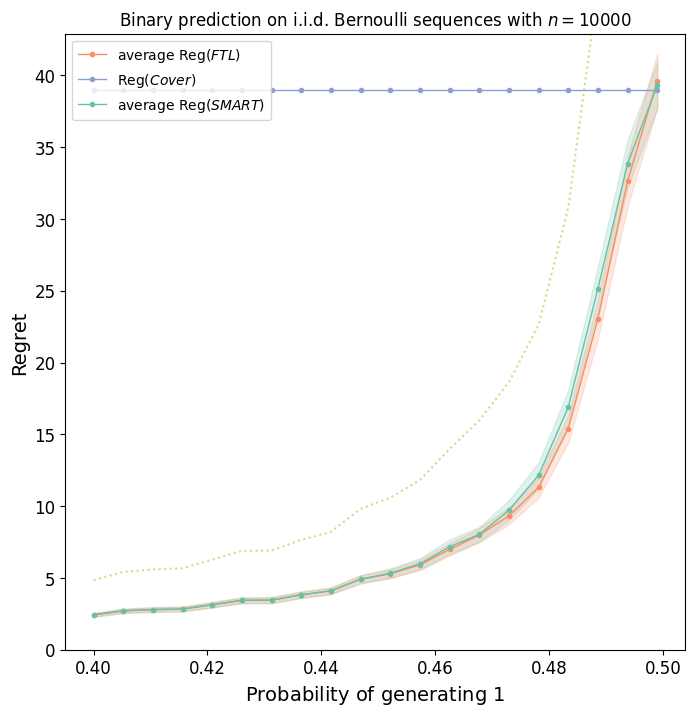}
  \end{minipage}%
  \hfill
  \begin{minipage}{.3\textwidth} 
    \centering
    \includegraphics[height=\linewidth]{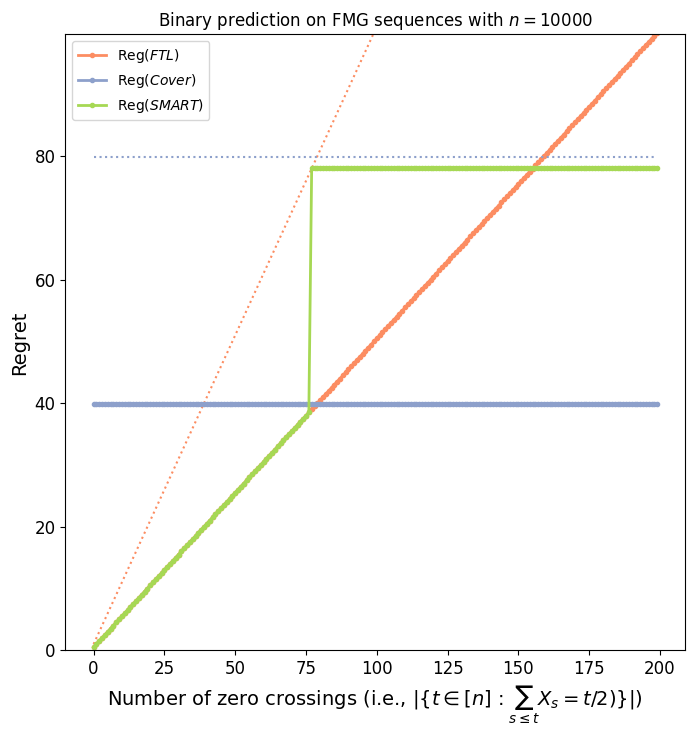}
  \end{minipage}%
  \hfill
  \begin{minipage}{.3\textwidth} 
    \centering
    \includegraphics[height=\linewidth]{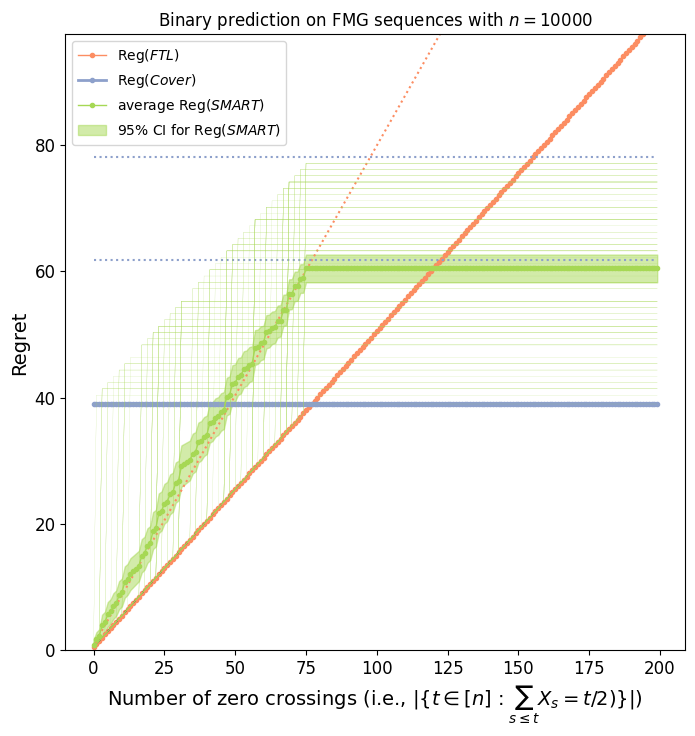}
  \end{minipage}
\captionsetup{format=plain}
  \caption{\it\footnotesize Comparing regret of $\ftl$, $\cover$ and $\bob$ on a collection of input sequences (for fixed $n$).\\ 
  $\bullet$ In Fig. $(a)$, we consider i.i.d. Bernoulli$(p)$ inputs for varying $p$. The regret of $\ftl$ is much lower than $\cover$ for $p<1/2$; the regret of $\bob$ tracks $\ftl$ closely (better than $2\Reg(\ftl)$, indicated by dotted line).\\
  $\bullet$ In Fig. $(b)$ and $(c)$, we consider `worst-case' binary sequences (as per~\citep{feder1992universal}) parameterized by the number of `lead-changes': the sequence with parameter $c$ comprises of $c$ pairs `$0,1$' or `$1,0$', followed by $n-2c$ `$1$'s. 
  In Fig. $(b)$, we consider $\bob$ with a deterministic switching threshold (\cref{thm:2ApproxRegret}) and compare $\Reg(\bob)$ with $2\Reg(\ftl)$ and $2\Reg(\cover)$ (dotted lines); in Fig. $(c)$, we use a randomized threshold (\cref{thm:SkiRentalRegret}), and show the average regret over the randomized threshold, as well as sample paths (plotted in green), and compare with $\frac{e}{e-1}$ times $\Reg(\ftl)$ and $\Reg(\cover)$ (dotted lines).}
  \label{fig:SMART}
  \vspace{-0.5cm}  
\end{figure}

We consider a general online learning setting where at the beginning of each round $t \in [n]$, a policy $\alg$ first plays an action $a_t \in \Ac$, following which, a loss function $\ell_t:\Ac\to [0,1]$ is revealed, resulting in a loss of $\ell_t(a_t)$. The \emph{regret} is defined according to:
\begin{align}
\label{eq:RegDefnGeneral}
    \Reg(\alg,\ell^n) = \textstyle\sum_{t=1}^n \ell_t(a_t) - \inf_{a \in \Ac} \textstyle\sum_{t=1}^n \ell_t(a).
\end{align} 
More generally, as in with bit prediction, we allow $\alg$ to play in round $t$ a measure $w_t\in\Delta_{\Ac}$ (i.e., play $\{w_t:\Ac\mapsto[0,1]|\sum_{a\in\Ac}w_t(a)=1\}$), resulting in an expected loss of $\sum_{a\in\Ac}w_t(a)\ell_t(a)$. For notational convenience, we henceforth use $(a_t,\ell_t(a_t))$ for the action/loss, and reserve use of expectations for randomness in the algorithm and/or sequence.

We want to understand when is it possible to attain instance optimality as in Eq.~\eqref{eq:instanceopt_binary} with respect to a given pair of algorithms. Ideally, we want the first to be optimal for stochastic instances, and the second to be minimax optimal; unfortunately however exact optimal policies are unknown except in simple settings. To this end, we make two amendments to our goal: First, for the stochastic optimal policy, we use $\ftl$; this is well defined in any online learning setting, and moreover, known to be optimal or near-optimal for a wide range of settings under minimal assumptions~\citep{kotlowski2018minimaxity}. Second, instead of the minimax policy, we use as reference \emph{any} policy $\wcalg$ which has a known worst case regret bound $g(n)$. With these modifications in place, we have the following objective.

\begin{definition} \label{def:instanceopt_wc}
Given $\ftl$ and any algorithm $\wcalg$ with $\sup_{\ell^n} \Reg(\wcalg, \ell^n) \le g(n)$, we say a policy $\alg$ is instance optimal with respect to the pair if there exists some universal $\gamma_n \geq 1$ (i.e. not depending on $y^n$) such that for every sequence of losses $\ell^n$:
\begin{equation*}\label{eq:instanceopt_wc}
\small \Reg(\alg,\ell^n)\leq \gamma_n \min\{\Reg(\ftl,\ell^n),g(n)\}
\end{equation*}
\end{definition}
While the above guarantee is not truly instance-optimal in that we are comparing against a worst-case regret bound $g(n)$ for $\wcalg$ rather than its performance on the instance $\ell^n$, the two are the same if $\wcalg$ is minimax optimal and hence attaining equal regret on all loss sequences; recall this is true of $\cover$  for binary prediction.

To realize the above goal, we propose the \emph{Switch via Monotone Adapted Regret Traces} ($\bob$) approach, which at a high level is a black-box way to convert design of instance-optimal policies into a simple \emph{optimal stopping} problem. 
Our approach depends on just two ingredients: first, owing to the additive structure of online learning problems, we have that the minimax guarantee $g(k)$ above holds over \emph{any} $k\in\mathbb{Z}$ and any (sub)sequence of $n$ loss functions; second, we show that $\ftl$ admits simple anytime regret estimator $\ftlsum_{\tau}$ (see Lemma~\ref{lem:FTLRegret}) which is \emph{monotone} and \emph{adapted} (i.e., a function only of historical data). Using these two observations, we can reduce the task of minimizing regret to a version of the `ski-rental' problem~\citep{karlin1994competitive,borodin2005online}, as follows:
we play $\ftl$ up to some stopping time $\tau$, and then switch to $\wcalg$ for the remaining $n-\tau$ periods, resetting all losses to zero. This algorithm  incurs a \emph{total} regret bounded by $\ftlsum_{\tau} + g(n-\tau)$, and using ideas from competitive analysis, we get that there is a simple way to choose the stopping time $\tau$ to achieve an $e/(e-1) \approx 1.58$-competitive ratio guarantee with respect to the minimum between the regret of $\ftl$ and the worst case guarantee $g(n)$.

\begin{thm}{\emph{\bf (See Theorem~\ref{thm:SkiRentalRegret})}}
      Let $\wcalg$ have worst-case regret $\sup_{\ell^n} \Reg(\wcalg, \ell^n) \le g(n)$
      where $g(n)$ is some monotonic function of $n$. An instantiation of $\bob$ achieves  
    \begin{align} \label{eq:2ApproxReg}
\small        \Reg(\bob, \ell^n) \le \frac{e}{e-1} \min\{\Reg(\ftl,\ell^n),g(n)\} + 1.
    \end{align}
\end{thm}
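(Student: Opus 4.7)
The plan is to reduce the regret-minimization task to a randomized ski-rental instance and then invoke the classical $e/(e-1)$-competitive threshold algorithm. The two enabling ingredients, both highlighted before the theorem, are: (i) the monotone and adapted FTL regret trace $\ftlsum_\tau$, which upper bounds $\Reg(\ftl, \ell^\tau)$ and can be evaluated from the history alone; and (ii) the fact that the worst-case bound $g(\cdot)$ for $\wcalg$ applies to every length-$k$ subsequence, so restarting $\wcalg$ on the tail incurs at most $g(n-\tau) \le g(n)$ additional regret by monotonicity.

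First, I would establish the decomposition: for any stopping time $\tau \in \{0, 1, \ldots, n\}$, the hybrid policy that plays $\ftl$ on rounds $1, \ldots, \tau$ and then restarts $\wcalg$ on rounds $\tau+1, \ldots, n$ has regret against the global hindsight-optimal action $a^\ast$ bounded by $\ftlsum_\tau + g(n-\tau)$. This follows by splitting $\sum_t \ell_t(a_t) - \sum_t \ell_t(a^\ast)$ at round $\tau$: the head contributes at most $\ftlsum_\tau$, because $\ftlsum_\tau$ dominates FTL's regret against every fixed action (including $a^\ast$) on the first $\tau$ rounds; the tail contributes at most $g(n-\tau)$, because $\wcalg$ competes against the best action on the length-$(n-\tau)$ subsequence (which is at least as good as $a^\ast$ there).

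Next, I would cast the choice of $\tau$ as randomized ski-rental with rental clock $\ftlsum_\tau$ and buying cost $g(n)$. Draw a random threshold $B$ from the density $f(b) = \frac{e^{b/g(n)}}{g(n)(e-1)}$ on $[0, g(n)]$ and let $\tau$ be the first round in which $\ftlsum_\tau \ge B$ (and $\tau = n$, i.e., never switch, if $\ftlsum_n < B$). The standard continuous ski-rental calculation, split on whether $\ftlsum_n \ge g(n)$ or not and integrated against $f$, then yields
\[
\E\!\left[\ftlsum_\tau + g(n-\tau)\right] \;\le\; \frac{e}{e-1}\,\min\{\ftlsum_n,\,g(n)\}.
\]
Combining this with $\Reg(\ftl, \ell^n) \le \ftlsum_n$ and the decomposition from the previous step gives the claimed bound, up to a discretization correction.

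The main technical obstacle I anticipate is precisely this discretization: unlike continuous ski-rental, $\ftlsum$ only updates at integer rounds and each increment may be as large as $1$ (since per-round losses lie in $[0,1]$), so at the switch round one has $\ftlsum_\tau \le B + 1$ rather than $\ftlsum_\tau = B$. Propagating this overshoot through the integral against $f(b)$ inflates the bound by a single additive $+1$ term, which is exactly the residual constant appearing in \eqref{eq:2ApproxReg}. Once this bookkeeping is settled, the rest of the argument is a direct transcription of the classical randomized ski-rental proof, with $\ftlsum_\tau$ serving as the \emph{adapted, monotone rental clock} that makes the reduction possible.
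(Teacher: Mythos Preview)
Your proposal is correct and follows essentially the same route as the paper. The decomposition you sketch is exactly Lemma~\ref{lemma:regret_skirental}; your threshold density $f(b)=\tfrac{e^{b/g(n)}}{g(n)(e-1)}$ on $[0,g(n)]$ matches the paper's choice $\theta=g(n)\ln(1+(e-1)U)$; and your handling of the $+1$ overshoot (from the one-step jump in $\ftlsum$) is precisely how the additive constant arises in the paper's Lemma~\ref{lemma:regret_skirental}. The only cosmetic difference is that the paper verifies the $e/(e-1)$ ratio by showing the function $\phi(z)=\int_0^z\tfrac{x+g}{z}f(x)\,dx+1-F(z)$ has vanishing derivative (hence is constant, equal to $\phi(g)=\tfrac{e}{e-1}$), whereas you invoke the ski-rental calculation by name; and note that $\ftlsum_n=\Reg(\ftl,\ell^n)$ holds with equality, not just $\leq$.
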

A highlight of our approach is the surprising simplicity of the algorithm and analysis, despite the strength of the instance optimality guarantee. In particular, our approach is modular, allowing one to plug in any $\wcalg$ and corresponding worst case bound $g(n)$, thus letting us handle any online learning setting with known minimax bounds. This results in an entire family of instance optimal policies for settings such as predictions with experts and online convex optimization. Moreover, the approach is easy to extend to get more complex guarantees; as an example, if $\wcalg$ is designed to get low regret for benign (i.e., `small-loss') sequences $\ell^n$, then we show how to use $\bob$ as a subroutine and achieve an instance optimal guarantee with respect to the regret of $\ftl$ and a small loss regret bound. 
\begin{corollary}[Following Theorem~\ref{thm:Regret_small_loss}]
\label{cor:SqrtLStarInstantiation}
    Consider the prediction with expert advice setting~\citep{cesa1997use}, where $\mathcal{A} = \Delta^{m-1}$, the $m-$simplex for $m \ge 2$, and $\ell_t(a) = \langle a, \ell_t \rangle $ for $\ell_t \in [0,1]^m$. Let $L^* \defeq \min_j \sum_{t=1}^n \ell_{tj}$. An instantiation of $\bob$ achieves
    \begin{align*}
        \Reg(\bob, \ell^n) 
        &\le 2 \min\left\{ \Reg(\ftl,\ell^n), 10 \sqrt{2L^* \log m} \right\} + O(\log L^* \log m).
    \end{align*}
\end{corollary}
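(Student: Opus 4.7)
The plan is to deduce Corollary~\ref{cor:SqrtLStarInstantiation} from Theorem~\ref{thm:Regret_small_loss} by instantiating its small-loss subroutine $\wcalg$ with a concrete algorithm on $\Delta^{m-1}$ whose worst-case regret envelope is $10\sqrt{2L^* \log m}$ up to lower-order terms. Once such a $\wcalg$ is exhibited, the corollary is a direct substitution into the theorem's $2$-competitive guarantee, so the real content is the design and analysis of the small-loss subroutine and the verification that its envelope is compatible with the monotone adapted regret trace machinery underlying $\bob$.

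For $\wcalg$ I would take Hedge combined with the standard doubling trick over $L^*$: partition the rounds into epochs indexed by $k = 0, 1, 2, \ldots$; in epoch $k$, run a fresh copy of Hedge with learning rate $\eta_k = \sqrt{(\log m)/2^k}$, and close the epoch the first time the minimum in-epoch cumulative loss of any expert exceeds $2^k$. A textbook Hedge analysis with a known loss budget gives per-epoch regret at most $2\sqrt{2^k \log m} + O(\log m)$. Since the epoch containing round $n$ has index $K \le \lceil \log_2 L^* \rceil$ and hence $2^K \le 2L^*$, summing the resulting geometric series yields a data-dependent regret envelope $\bar g(\ell^n) \le 10\sqrt{2L^* \log m} + O(\log L^* \log m)$ uniformly over $\ell^n$, where the additive term collects the $O(\log m)$ initialization cost across the at most $\log L^*$ epochs.

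Plugging $\wcalg$ and its envelope $\bar g$ into Theorem~\ref{thm:Regret_small_loss} then immediately gives
$$\Reg(\bob, \ell^n) \le 2 \min\left\{ \Reg(\ftl, \ell^n),\, 10\sqrt{2L^* \log m} \right\} + O(\log L^* \log m),$$
where the additive slack absorbs both the $O(\log L^* \log m)$ overhead from the doubling trick and any ski-rental-style additive term inherited from Theorem~\ref{thm:Regret_small_loss}. A minor point to observe is that if $\bob$ switches to $\wcalg$ only on the suffix $[\tau+1, n]$, then the Hedge-with-doubling analysis applied to that suffix gives the same envelope with $L^*$ replaced by the suffix minimum-expert loss, which is at most the overall $L^*$; so the stated bound is preserved regardless of the switching time chosen by $\bob$.

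The step I expect to be the most delicate is arguing that $\bar g$ is a valid monotone adapted envelope for $\wcalg$ in the sense $\bob$ requires. Unlike the fixed $g(n)$ used in Theorem~\ref{thm:SkiRentalRegret}, this envelope grows with the data through the running minimum-expert loss; however, that running quantity is non-decreasing and adapted, so it plugs into the ski-rental stopping rule with only the mild modification already handled by Theorem~\ref{thm:Regret_small_loss}. Once that point is in hand, the remainder of the corollary is a bookkeeping exercise consolidating the constants from the doubling-trick analysis and the $2$-competitive ratio.
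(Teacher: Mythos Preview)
Your overall plan—pick a small-loss $\wcalg$ and invoke Theorem~\ref{thm:Regret_small_loss}—is the right one, but you misplace the doubling trick and misread what the theorem actually outputs. Theorem~\ref{thm:Regret_small_loss} (via Algorithm~2) already performs guess-and-double over $L^*$ \emph{inside} $\bob$; its conclusion is
\[
\Reg(\bob,\ell^n)\ \le\ 2\min\Big\{\Reg(\ftl,\ell^n),\ \textstyle\sum_{z} g(2^z\log m)\Big\}+O(\log L^*/\log m),
\]
not $2\min\{\Reg(\ftl),g(L^*)\}$. So the content of the corollary is exactly to evaluate that geometric sum, not merely to exhibit a small-loss subroutine and declare the result immediate.

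Because the doubling sits in $\bob$, there is no reason to also double inside $\wcalg$. The paper simply takes $\wcalg$ to be Hedge with a self-tuning learning rate (Auer--Cesa-Bianchi--Gentile), which already satisfies $g(L^*)=2\sqrt{2L^*\log m}+\kappa\log m$ on any sequence without any doubling. The whole proof of the corollary is then the short computation
\[
\sum_{z=0}^{\log(1+L^*/\log m)+1}\!\!\big(2\sqrt{2\cdot 2^z\log^2 m}+\kappa\log m\big)\ \le\ \frac{4\sqrt{2}}{\sqrt 2-1}\sqrt{\log^2 m+L^*\log m}+O(\log L^*\,\log m),
\]
which after $\sqrt{a+b}\le\sqrt a+\sqrt b$ gives the stated $10\sqrt{2L^*\log m}+O(\log L^*\,\log m)$.

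If you insist on your doubled-Hedge as $\wcalg$, plugging its envelope $\bar g$ into Theorem~\ref{thm:Regret_small_loss} forces a \emph{second} geometric sum over the values $\bar g(2^z\log m)$; this still gives the right order, but the leading constant exceeds $10$ and the additive term inflates to $O((\log L^*)^2\log m)$, so the inequality as stated would not follow. The fix is simply to drop your internal doubling, use $g(L^*)=2\sqrt{2L^*\log m}+\kappa\log m$ directly, and carry out the sum above.
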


Finally, studying instance optimality also lets us understand the fundamental limits of best-of-both worlds algorithms.
To this end, we provide a lower bound that shows our algorithm is nearly optimal in the competitive ratio. 
To the best of our knowledge, this is the first hardness result for best-of-both-worlds guarantees in online learning.
\begin{thm}{\emph{\bf (See Theorem~\ref{thm:LowerBdCR})}}
In the binary prediction setting, given \emph{any} online algorithm $\alg$, there exist sequences $y^n\in\{0,1\}^n$ such that:
\begin{align*}
    \Reg(\alg,y^n)
    &\geq 1.43\min\left\{\Reg(\ftl,y^n),\Reg(\cover,y^n)\right\}
\end{align*}
\end{thm}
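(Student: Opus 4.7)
The plan is to apply Yao's minimax principle: it suffices to exhibit a single distribution $\pi$ over sequences $y^n \in \{0,1\}^n$ such that every \emph{deterministic} algorithm $\alg$ satisfies $\E_{y^n \sim \pi}[\Reg(\alg, y^n)] \geq 1.43 \cdot \E_{y^n \sim \pi}[\min\{\Reg(\ftl, y^n), \Reg(\cover, y^n)\}]$. Averaging this inequality over the mixture underlying any randomized $\alg$ preserves it, and a standard existence argument then produces a single sequence $y^n$ on which the stated ratio is attained.

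For $\pi$, I would use a family $\{y^n(\tau)\}_{\tau \in [n]}$ indexed by a ``trigger time'' $\tau$. The prefix $y^n(\tau)|_{1:\tau}$ is a fixed benign pattern (say all zeros) on which $\ftl$ suffers only $O(1)$ regret, while the suffix $y^n(\tau)|_{\tau+1:n}$ is a lead-change construction in the spirit of Feder--Merhav--Gutman (see \cref{fig:SMART}$(b,c)$) that, appended to the benign prefix, forces $\Reg(\ftl, y^n(\tau))$ to scale linearly in $n - \tau$. Two structural features make the family useful: (i) the shared prefix means any deterministic $\alg$ plays identical actions on rounds $1, \ldots, \tau$ across all realizations with trigger time at least $\tau$, so $\alg$'s effective degree of freedom is the choice of when to hedge away from its $\ftl$-like prefix behavior; and (ii) because $\cover$ is an equalizer \citep{Cover1966}, $\Reg(\cover, y^n(\tau)) \approx \sqrt{n/(2\pi)}$ uniformly in $\tau$, pinning the worst-case side of the min.

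Against any deterministic $\alg$, the expected regret under $\pi$ then splits into an ``early-trigger'' term---where $\tau$ is small and $\alg$, still shadowing the fixed prefix behavior, pays roughly $\Reg(\ftl, y^n(\tau))$ on the adversarial suffix---and a ``late-trigger'' term---where $\tau$ is large and $\alg$ has hedged away from $\ftl$ prematurely, paying at least $\Omega(\sqrt{(n-\sigma)/(2\pi)})$ by the minimax lower bound for binary prediction applied to the suffix, where $\sigma$ is $\alg$'s effective switching time. The expected minimum decomposes similarly: $\ftl$ is the better benchmark on small-$\tau$ realizations (where $\Reg(\ftl, y^n(\tau))$ is well below $\sqrt{n}$) and $\cover$ on large-$\tau$ ones. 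Choosing $\pi$ of ski-rental type---concentrating mass near the crossover $n - \tau \sim \sqrt{n}$---equalizes the two contributions and, after a direct optimization over $\sigma$, yields the $1.43$ bound.

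The main obstacle will be pinning the constant at $1.43$ rather than something weaker. Unlike classical ski-rental, where rent accrues at a clean rate of $1$ per round, here $\Reg(\ftl, y^n(\tau))$ accumulates at a fractional slope determined by the lead-change density of the adversarial suffix, while $\cover$'s baseline $\sqrt{n/(2\pi)}$ itself degrades mildly on shorter suffixes rather than being flat. The tight ratio is the saddle-point value of a minimax program between the adversary's prior $\pi$ and the algorithm's (possibly randomized) switching behavior; extracting the clean $1.43$ requires careful bookkeeping of (a) the precise slope of $\Reg(\ftl, y^n(\tau))$ as a function of $n - \tau$, (b) Rademacher-type minimax lower bounds for binary prediction on arbitrary-length subsequences, and (c) the $O(1)$ boundary terms coming from the prefix and from $\ftl$'s tiebreaking. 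This last saddle-point optimization---rather than the adversarial construction itself---is the key technical hurdle, and it is what prevents the bound from matching the $e/(e-1)\approx 1.58$ ratio of vanilla ski-rental.
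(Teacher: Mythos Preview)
Your plan has a genuine gap, and the paper's route is quite different. You try to force a ski-rental structure on an arbitrary algorithm by arguing its ``effective degree of freedom is the choice of when to hedge away from its $\ftl$-like prefix behavior.'' But nothing in the problem constrains a general predictor to look like $\ftl$-then-switch; it could play, say, $a_t\equiv 0.7$ throughout, or any history-dependent mixture that never resembles $\ftl$. Your late-trigger term invokes ``the minimax lower bound for binary prediction applied to the suffix,'' but that bound guarantees only the \emph{existence} of a bad sequence---it says nothing about regret on the \emph{specific} suffix $y^n(\tau)|_{\tau+1:n}$ you have already fixed. So neither the early-trigger nor the late-trigger lower bound on $\E_\pi[\Reg(\alg,y^n)]$ is actually established, and without them the saddle-point computation has nothing to optimize.

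The paper sidesteps all of this by taking $\pi$ to be i.i.d.\ $\Ber(1/2)$ and exploiting Cover's balance condition (\cref{thm:CoverAchievableFn}): under uniform bits, \emph{every} predictor has expected loss exactly $n/2$, hence expected regret exactly $f_n$, with no need to reason about switching structure at all. The inequality $\gamma_n\,\E[\min\{\tfrac12 c(\e^{n-1}),f_n\}]\ge f_n$ then drops out immediately. The constant $1.4335=(1-e^{-1/\pi}+2Q(\sqrt{2/\pi}))^{-1}$ arises not from a ski-rental saddle point but from computing $\E[\min\{c(\e^{n-1}),2f_n\}]$ via Feller's exact formula $p_{n,k}=2^{-(n-k)}\binom{n-k}{n/2}$ for the zero-crossing count of a simple random walk, followed by a Gaussian/Riemann-sum approximation. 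Your construction, even if its gaps were patched, would at best recover a ski-rental-type constant---and the paper notes explicitly (\cref{rem:SkiRentalLB}) that the $e/(e-1)$ ski-rental bound governs only single-switch policies, not arbitrary ones.
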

Note again that in binary prediction, $\ftl$ achieves the optimal pseudoregret under i.i.d. inputs, while $\cover$ is the true minimax policy; thus, this is a fundamental lower bound on best-of-both-worlds guarantee in any online learning setting.

\subsection{Related work} \label{sec:relatedwork}
There have been many approaches towards combining stochastic and worst-case guarantees. As we discussed before, there is a large literature on best-of-both-worlds algorithms for both full and partial information settings~\citep{Wei--Luo2018, Bubeck--Li--Luo--Wei2019, Zimmert--Seldin2019, Dann--Wei--Zimmert2023}, and also more complex settings such as metrical task systems~\citep{bhuyan2023best} and control~\citep{sabag2021regret, goel2023best}. Another line of work~\citep{Rakhlin--Sridharan--Tewari2011, Haghtalab--Roughgarden--Shetty2022, Block--Dagan--Golowich--Rakhlin2022, Bhatt--Haghtalab--Shetty2023} considers \emph{smoothed analysis}, where the worst-case actions of the adversary are perturbed by nature. A third approach~\citep{bubeck2012best,Lykouris--Mirrokni--PaesLeme2018, Amir--Attias--Koren--Mansour--Livni2020, Zimmert--Seldin2019} interpolates between the stochastic and adversarial settings by considering most $\ell_t$ to be i.i.d., interspersed with a few adversarially chosen instances (corruptions). Finally, another line considers the data-generating distribution to come from a ball of specified radius around i.i.d. distributions~\citep{mourtada2019optimality, bilodeau2023relaxing}. While all these approaches provide useful insights into the gap between average and worst-case guarantees, one can argue they are all imprecisely specified -- given an instance $\{\ell_t\}_{t\in[n]}$ in hindsight, there is no clear sense as to which model best `explains' the instance.

Our focus on instance optimality instead follows the approach of better understanding and shaping the per-sequence regret landscape. The origins of this approach arguably come from the seminal work of~\cite{Cover1966} for binary prediction (we discuss this in more detail in~\cref{sec:converse}), with a later focus on better bounds for benign instances in general online learning~\citep{Auer--Cesa-Bianchi--Gentile2002,cesa2005minimizing,hazan2010extracting}. More recently, 
a line of work~\citep{Koolen--VanErven--Grunwald2014LLR, VanErven--Grunwald--Mehta--Reid--Williamson2015FastRates, VanErven--Koolen2016Metagrad, gaillard2014second} have studied designing policies that can adapt to different types of data sequences and achieve multiple performance guarantees simultaneously. The main idea is to use multiple learning rates that are weighted according to their empirical performance on the data. While the focus is still primarily on classifying instances based on when they are easier/harder to learn, some of the resulting guarantees have an instance-optimality flavor; for example,~\cite{VanErven--Koolen2016Metagrad} show how to simultaneously match the performance guarantee (in terms of certain variance bounds) attained by different learning rates in Hedge. Such approaches however need to understand their baseline algorithms in great detail, and use them in a `white-box' way to get their guarantees. 

In contrast, our approach fundamentally focuses on combining policies in a black-box way to get instance optimal outcomes. As we mention, this is similar in spirit to corralling bandit algorithms~\cite{agarwal2017corralling,pacchiano2020model,Dann--Wei--Zimmert2023}, as well as more recent work on online algorithms with predictions~\cite{bamas2020primal,dinitz2022algorithms,anand2022online}; however, as we mention above, these all get guarantees with respect to the loss of the reference algorithms, which is much weaker than our regret guarantees (though they do so in much more complex settings with partial information and/or state).
To our knowledge, the only previous result which attains a comparable instance-optimality guarantee to ours is that of~\cite{de2014follow} for the experts problem, where the authors propose the $\mathsf{FlipFlop}$ policy which interleaves $\mathsf{Hedge}$ (with varying learning rates) and $\mathsf{FTL}$ to obtain a regret guarantee similar to that of Corollary~\ref{cor:SqrtLStarInstantiation}. In fact, their guarantee is stronger as $\mathsf{FlipFlop}$ is shown to be 5.64-competitive with respect to $\min\{\Reg(\ftl,\ell^n), g(L^*)\}$ where $g(L^*) \le \sqrt{L^* \log m}$~\citep[Corollary 16]{de2014follow}. However, 
while $\mathsf{FlipFlop}$ depends on a clever choice of learning rates in $\mathsf{Hedge}$, $\bob$ can black-box interleave $\ftl$ with \emph{any} worst-case/small-loss algorithm without knowing the inner workings of said algorithm, which we see as a significant engineering strength. More importantly, our approach to this problem is distinct as we focus on the fundamental limits (upper and lower bounds) on the \emph{competitive ratio} that must be incurred when combining $\ftl$ with a worst-case policy; to the best of our knowledge this viewpoint, and the corresponding reduction to an optimal stopping problem, has not been previously explored. 

\section{Instance Optimal Online Learning: Achievability via $\bob$} \label{sec:twoapprox}

Given the setting and problem statement above, we can now present the $\bob$ policy. 
We do this for a general online learning problem, wherein we want to combine $\ftl$ with any given algorithm $\wcalg$ with a worst-case regret guarantee $
\sup_{\ell^n} \Reg(\wcalg, \ell^n) \le g(n)
$. 
Before presenting the policy, we first need the following regret decomposition for $\ftl$. 
\begin{lemma}[Regret of $\ftl$] \label{lem:FTLRegret}
If $L_t(\cdot) \defeq \sum_{i=1}^t \ell_t(\cdot)$, i.e. the cumulative loss function, then
\begin{align*} 
    \Reg(\ftl, \ell^n) = \textstyle\sum_{t=1}^n (L_t(a^*_{t-1}) - L_t(a^*_t)). 
\end{align*}
\end{lemma}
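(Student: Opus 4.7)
The plan is to prove the identity by a one-line telescoping argument after unpacking the definitions. First I would recall that $\ftl$ plays $a_t = a^*_{t-1} \defeq \argmin_{a\in\Ac} L_{t-1}(a)$ in round $t$, with the convention $L_0 \equiv 0$ (so $a^*_0$ can be taken arbitrarily, since it only contributes through $L_0(a^*_0)=0$). Then $\Reg(\ftl,\ell^n) = \sum_{t=1}^n \ell_t(a^*_{t-1}) - L_n(a^*_n)$ directly from the definition in~\eqref{eq:RegDefnGeneral}.

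Next I would rewrite the per-round loss of $\ftl$ via the identity $\ell_t(a) = L_t(a) - L_{t-1}(a)$, applied with $a = a^*_{t-1}$. This gives
\[
\Reg(\ftl,\ell^n) = \sum_{t=1}^n \bigl(L_t(a^*_{t-1}) - L_{t-1}(a^*_{t-1})\bigr) - L_n(a^*_n).
\]
The goal is to replace $-L_{t-1}(a^*_{t-1})$ inside the sum by $-L_t(a^*_t)$, at which point the desired identity emerges.

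To do this, I would reindex: $\sum_{t=1}^n L_{t-1}(a^*_{t-1}) = \sum_{s=0}^{n-1} L_s(a^*_s)$. Combining this with the boundary term $-L_n(a^*_n)$, the two together equal $-\sum_{t=1}^n L_t(a^*_t) + L_0(a^*_0)$, and $L_0(a^*_0) = 0$ by the convention above. Substituting back yields
\[
\Reg(\ftl,\ell^n) = \sum_{t=1}^n L_t(a^*_{t-1}) - \sum_{t=1}^n L_t(a^*_t) = \sum_{t=1}^n \bigl(L_t(a^*_{t-1}) - L_t(a^*_t)\bigr),
\]
as claimed.

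There is no real obstacle here — the statement is essentially the standard ``be-the-leader / follow-the-leader'' identity, and the proof is bookkeeping. The only care needed is the boundary convention $L_0 \equiv 0$ together with the fact that the identity does not require any uniqueness of the minimizers $a^*_t$; any selection works because we only ever use $L_t$-values at $a^*_{t-1}$ and $a^*_t$, both of which are well-defined by the definition of $a^*_t$ as a minimizer of $L_t$.
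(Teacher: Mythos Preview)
Your proposal is correct and essentially identical to the paper's own proof: both start from $\Reg(\ftl,\ell^n)=\sum_t \ell_t(a^*_{t-1})-L_n(a^*_n)$, rewrite $\ell_t(a^*_{t-1})=L_t(a^*_{t-1})-L_{t-1}(a^*_{t-1})$, and then telescope. The only cosmetic difference is that the paper adds and subtracts $L_t(a^*_t)$ and telescopes $\sum_t(L_t(a^*_t)-L_{t-1}(a^*_{t-1}))=L_n(a^*_n)$, whereas you reindex the sum directly; these are the same computation.
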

This is reminiscent of the `be-the-leader' lemma~\citep{kalai2005efficient,slivkins2019introduction}, although never stated explicitly as an exact decomposition. 
\begin{proof}
Recall we define $\aftl_t = a^*_{t-1}$, and hence $\inf_{a \in \Ac} \sum_{t=1}^n \ell_t(a)= L_n(a^*_n) $. Now we have 
    \begin{align*}
        \Reg(\ftl, \ell^n) 
        &= \textstyle\sum_{t=1}^n \ell_t(a^*_{t-1})  - L_n(a^*_n) \\
        &= \textstyle\sum_{t=1}^n (L_t(a^*_{t-1}) - L_{t-1}(a^*_{t-1})) -  L_n(a^*_n) \\
        &= \textstyle\sum_{t=1}^n (L_t(a^*_{t-1}) - L_t(a^*_t)) + \textstyle\sum_{t=1}^n (L_t(a^*_t) - L_{t-1}(a^*_{t-1})) - L_n(a^*_n) \\
        &\stackrel{(a)}{=} \textstyle\sum_{t=1}^n (L_t(a^*_{t-1}) - L_t(a^*_t))
    \end{align*}
    where $(a)$ follows since $\sum_{t=1}^n (L_t(a^*_t) - L_{t-1}(a^*_{t-1})) = L_n(a^*_n)$ by telescoping. 
\end{proof}
Next, let $\ftlsum_t$ denote the \emph{anytime} regret that $\ftl$ incurs up to round $t$ (i.e., assuming the game ends after round $t$). From Lemma~\ref{lem:FTLRegret} we have
\begin{align} 
\label{eq:FTLRegEquality}
 \ftlsum_t \defeq \Reg(\ftl, \ell^t) = \textstyle\sum_{i=1}^t (L_i(a^*_{i-1}) - L_i(a^*_i)) 
\end{align}
Now we make three critical observations: 
\begin{itemize}
    \item $\ftlsum_t$ is \textbf{adapted}: it can be computed at the end of the $t^{th}$ round 
    \item $\ftlsum_t$ is \textbf{monotone} non-decreasing in $t$ (since by definition $L_i(a^*_{i-1}) - L_i(a^*_i) \ge 0$)
    \item $\ftlsum_t$ is an \textbf{anytime lower bound} for $\Reg(\ftl,\ell^n)$, with $\ftlsum_n = \Reg(\ftl, \ell^n)$
\end{itemize}
For an input threshold $\theta \ge 0$ and an algorithm $\wcalg$, we get the following (meta)algorithm.

\begin{algorithm}[ht] 
\label{alg:bob}
\SetAlgoNoLine
\caption{Switching via Monotone Adapted Regret Traces ($\bob$)}
\label{alg:bob}
\KwIn{Policies $\ftl, \wcalg$, threshold $\theta$} 
\textbf{Initialize} $\ftlsum_0=0$, $t=1$ \;
\While{$\ftlsum_{t-1} \le \theta$}
{
    Set $a_t = a_{t-1}^{*}$%
         \tcp*{Play $\ftl$}
    Observe $\ell_t(\cdot)$\; 
    Update $L_t(\cdot)=L_{t-1}(\cdot)+\ell_t(\cdot)$ and $\ftlsum_t=\ftlsum_{t-1}+(L_t(a_{t-1}^{*})-L_t(a_{t}^{*}))$ and $t = t+1$\;
}
Reset losses to $0$ and play $\wcalg$ for remaining rounds (See~\cref{fig:proof_intuition}(b)) \;
\end{algorithm}

We now have the following performance guarantee for Algorithm~\ref{alg:bob} for $\theta=g(n)$.
\begin{theorem}[Regret of $\bob$ with deterministic threshold]  
\label{thm:2ApproxRegret}
      We are given $\ftl$, and any other policy $\wcalg$ with worst-case regret $\sup_{\ell^n} \Reg(\wcalg, \ell^n) \le g(n)$
      for some monotone function $g$. Then, playing $\bob$ with threshold $\theta = g(n)$ ensures 
    \begin{align} \label{eq:2ApproxReg}
        \Reg(\bob, \ell^n) \le 2 \min\{\Reg(\ftl,\ell^n),g(n)\} + 1.
    \end{align}
\end{theorem}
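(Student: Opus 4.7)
I split on whether $\bob$ ever triggers its switch from $\ftl$ to $\wcalg$, and in each case establish both $\Reg(\bob,\ell^n)\leq 2\Reg(\ftl,\ell^n)+1$ and $\Reg(\bob,\ell^n)\leq 2g(n)+1$; together these yield~\eqref{eq:2ApproxReg}.

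\textbf{Step 1 (bounded one-step increments).} I would first show that $\ftlsum_t-\ftlsum_{t-1}\leq 1$ for every $t$. Writing $\ftlsum_t-\ftlsum_{t-1}=L_t(a^*_{t-1})-L_t(a^*_t)$ and expanding $L_t=L_{t-1}+\ell_t$, optimality of $a^*_{t-1}$ for $L_{t-1}$ collapses the difference to $\ell_t(a^*_{t-1})-\ell_t(a^*_t)\in[0,1]$. This one-step bound is the only source of the $+1$ slack in the theorem.

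\textbf{Step 2 (Case A: no switch).} If $\ftlsum_{n-1}\leq g(n)$, then $\bob$ plays $\ftl$ throughout, so $\Reg(\bob,\ell^n)=\ftlsum_n\leq g(n)+1$ by Step 1; both target inequalities follow immediately.

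\textbf{Step 3 (Case B: switch at time $\tau$).} The stopping rule gives $\ftlsum_{\tau-1}\leq g(n)<\ftlsum_\tau\leq g(n)+1$. Let $\tilde a\in\argmin_{a\in\Ac}\sum_{t=\tau+1}^n\ell_t(a)$, and let $R_{\mathrm{tail}}$ denote the regret of $\wcalg$ on the reset loss sequence $\ell_{\tau+1},\ldots,\ell_n$; by the worst-case guarantee and monotonicity of $g$, $R_{\mathrm{tail}}\leq g(n-\tau)\leq g(n)$. Adding and subtracting $L_\tau(a^*_\tau)+\sum_{t=\tau+1}^n\ell_t(\tilde a)$ gives the three-term decomposition
\begin{align*}
\Reg(\bob,\ell^n)=\ftlsum_\tau+R_{\mathrm{tail}}+\Bigl[L_\tau(a^*_\tau)+\textstyle\sum_{t=\tau+1}^n\ell_t(\tilde a)-L_n(a^*_n)\Bigr].
\end{align*}
The third bracket is non-positive: plugging the global optimum $a^*_n$ into each partial minimum gives $L_\tau(a^*_\tau)\leq L_\tau(a^*_n)$ and $\sum_{t=\tau+1}^n\ell_t(\tilde a)\leq\sum_{t=\tau+1}^n\ell_t(a^*_n)$, and the right-hand sides sum exactly to $L_n(a^*_n)$. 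Combining, $\Reg(\bob,\ell^n)\leq(g(n)+1)+g(n)=2g(n)+1$. Finally, monotonicity of $\ftlsum_t$ gives $\Reg(\ftl,\ell^n)=\ftlsum_n\geq\ftlsum_\tau>g(n)$, so $\min\{\Reg(\ftl,\ell^n),g(n)\}=g(n)$ and the same bound also yields $\Reg(\bob,\ell^n)\leq 2\Reg(\ftl,\ell^n)+1$.

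\textbf{Main obstacle.} The substantive step is the non-positivity of the third bracket, which is what justifies resetting losses to zero when handing off to $\wcalg$ without paying for it globally: partitioning the hindsight problem into a prefix and a suffix and optimizing each piece separately is always at least as good as optimizing jointly. Combined with the adapted, monotone structure of $\ftlsum_t$ (which makes the stopping rule well-defined and controls the first bracket at $g(n)+1$), this reduces the entire argument to the bounded-increment calculation of Step 1.
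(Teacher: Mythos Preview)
Your proposal is correct and follows essentially the same route as the paper: the paper packages your Step~3 decomposition as Lemma~\ref{lemma:regret_skirental} (the prefix/suffix split with your ``third bracket'' argument), and your Step~1 one-step increment bound is exactly what the paper uses at the end of that lemma's proof to get the $+1$. The only cosmetic difference is that the paper splits cases on $\Reg(\ftl,\ell^n)\lessgtr g(n)$ rather than on whether the switch triggers, but by monotonicity of $\ftlsum_t$ these are equivalent up to the boundary case you already handle in Step~2.
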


As we mention before, the structure of the $\bob$ algorithm (and the resulting guarantee) parallels the standard 2-competitive guarantee for the \emph{ski-rental} problem~\citep{karlin1994competitive}. 
This is a classical optimal stopping problem, where a principal faces an unknown horizon, and in each period must decide whether to rent a pair of skis for the period (for fixed cost \$1) or buy the skis for the remaining horizon (for fixed cost \$$B$). The aim is to design a policy which is minimax optimal (over the unknown horizon) with regards to the ratio of the cost paid by the principal, and the optimal cost in hindsight. 
We further expand on this connection in~\cref{ssec:binary} for the case of binary prediction. However, the connection suggests a natural follow-up question of whether randomized switching can help (as is the case for ski-rental); the following result answers this in the affirmative.

\begin{theorem}[Regret of $\bob$ with Randomized Thresholds]  
\label{thm:SkiRentalRegret}
We are given $\ftl$ and any other policy $\wcalg$ with worst-case regret $\sup_{\ell^n} \Reg(\wcalg, \ell^n) \le g(n)$
for some monotone function $g$. 
Moreover, given random sample $U\sim\text{Unif}[0,1]$, suppose we set
      \begin{align*}
      \theta=g(n)\ln(1+(e-1)U)
      \end{align*}
      Then playing the $\bob$ policy (Algorithm \ref{alg:bob}) with random threshold $\theta$ ensures 
    \begin{align*} 
        \E_{\theta}&\left[\Reg(\bob, \ell^n)\right]
        \le \frac{e}{e-1} \min\{\Reg(\ftl,\ell^n),g(n)\} + 1.
    \end{align*}
\end{theorem}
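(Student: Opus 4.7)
The strategy is to reduce the claim to a randomized ski-rental style competitive analysis, extending the deterministic argument behind Theorem~\ref{thm:2ApproxRegret} by averaging over the distribution of $\theta$. The reduction works because the switching time $\tau$ (the last round $\bob$ plays $\ftl$) is a stopping time determined by the monotone, adapted trace $\ftlsum_{\cdot}$ crossing the threshold $\theta$, mirroring the randomized ``buy time'' in classical ski-rental.

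I first establish a pointwise-in-$\theta$ regret decomposition: via subadditivity of the infimum, $\inf_a \sum_{t=1}^n \ell_t(a) \ge \inf_a \sum_{t=1}^\tau \ell_t(a) + \inf_a \sum_{t=\tau+1}^n \ell_t(a)$, together with Lemma~\ref{lem:FTLRegret} applied to the first phase and the worst-case bound $g(n-\tau)\le g(n)$ applied to the second phase, one obtains $\Reg(\bob,\ell^n) \le \ftlsum_\tau + g(n-\tau)$. If the switch actually occurs, the while-loop exit condition $\ftlsum_{\tau-1}\le \theta$ combined with the per-round increment $\ftlsum_\tau - \ftlsum_{\tau-1} = L_\tau(a^*_{\tau-1}) - L_\tau(a^*_\tau) \le \ell_\tau(a^*_{\tau-1}) - \ell_\tau(a^*_\tau) \le 1$ yields $\Reg(\bob,\ell^n) \le \theta + 1 + g(n)$; otherwise $\bob$ plays $\ftl$ throughout and $\Reg(\bob,\ell^n) = \Reg(\ftl, \ell^n)$.

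Next I integrate over $\theta$, splitting on whether $M \defeq \Reg(\ftl, \ell^n)$ exceeds $g(n)$. When $M > g(n)$, every realization of $\theta \le g(n) < M$ forces a switch, so the bound collapses to $\E[\theta] + 1 + g(n)$; the substitution $v = 1+(e-1)u$ in $\E[\theta] = g(n)\int_0^1 \ln(1+(e-1)u)\,du$ yields $\E[\theta] = g(n)/(e-1)$, giving $\tfrac{e}{e-1}g(n)+1$ as required. When $M \le g(n)$, set $\alpha = M/g(n) \in [0,1]$: one computes $\Prob(\theta \ge M) = (e-e^\alpha)/(e-1)$ on which no switch occurs and we pay at most $M$, and $\E[\theta \, \mathbf{1}\{\theta < M\}]$ by the same substitution on which we pay at most $\theta + 1 + g(n)$. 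The density $\theta = g(n)\ln(1+(e-1)U)$ is calibrated precisely so the $g(n)$-dependent contributions collapse into the single term $\tfrac{\alpha e}{e-1}g(n) = \tfrac{e}{e-1}M$, leaving a residual of at most $\tfrac{e^\alpha - 1}{e-1}\le 1$.

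The main obstacle is verifying that this particular density makes the bound tight uniformly in $\alpha \in [0,1]$: the algebra works only because $e^\alpha$ is a fixed point of differentiation, causing the contributions from $M\cdot\Prob(\theta\ge M)$ and $\E[\theta\,\mathbf{1}\{\theta<M\}]$ to cancel exactly. This is the canonical randomized ski-rental calculation, and once it is in hand, everything else --- the pointwise regret decomposition, the overshoot bound, and the case split on $M$ versus $g(n)$ --- is essentially direct from the deterministic argument of Theorem~\ref{thm:2ApproxRegret} together with the monotone adapted structure of $\ftlsum_\cdot$ established in Lemma~\ref{lem:FTLRegret}.
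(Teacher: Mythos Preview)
Your proposal is correct and follows essentially the same route as the paper: reduce to the ski-rental-style bound $\Reg(\bob,\ell^n)\le \theta+1+g(n)$ on the switching event (this is the paper's Lemma~\ref{lemma:regret_skirental}, which you reconstruct via subadditivity of the infimum and the overshoot bound), then integrate over the chosen density of $\theta$ with the case split on $\Reg(\ftl,\ell^n)\lessgtr g(n)$. The only cosmetic difference is in the final calculus: the paper packages the ratio as $\phi(z)=\int_0^z\frac{x+g}{z}f(x)\,dx+1-F(z)$ and shows $\phi'(z)\equiv 0$ (hence $\phi\equiv\phi(g)=\tfrac{e}{e-1}$), whereas you evaluate $\E[\theta\,\mathbf{1}\{\theta<M\}]$ and $M\cdot\Prob(\theta\ge M)$ directly and verify the cancellation by hand; these are the same computation.
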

While we state the above as a randomized switching policy, this is more for interpretability -- it is easier to view our policy as switching between two black-box algorithms rather than playing a convex combination of the two. However, since we define the loss incurred by any $\alg$ in round $t$ to be the expected loss when $\alg$ plays a distribution $w_t$ over actions $\Ac$ (see~\cref{ssec:mainresults}), therefore we can alternately implement the above by mixing between the actions of $\ftl$ and $\wcalg$. More specifically, the above policy induces a \emph{monotone} mixing rule, where over $t$, the weight on the action suggested by $\wcalg$ is non-decreasing.

\begin{remark}[Optimality over Monotone Mixing Policies] 
\label{rem:SkiRentalLB}
The competitive ratio of $\frac{e}{e-1}$ is known to be optimal for the ski-rental problem via Yao's minmax theorem~\citep{borodin2005online}. A direct corollary of this is the optimality of $\bob$ over algorithms that are single switch (i.e., where the weight on $\wcalg$ is non-decreasing in $t$). One difference between our setting and ski-rental is that switching back-and-forth between $\ftl$ and $\wcalg$ is possible; see for example the $\mathsf{FlipFlop}$ policy of~\cite{de2014follow}. In~\cref{sec:converse} we provide a fundamental lower bound of $1.43$ on the competitive ratio over \emph{all} algorithms; this suggests that multiple switching can help get a better competitive ratio (since $e/(e-1)\approx 1.58$), but also, that single-switch policies are surprisingly close to optimal.
\end{remark}

\subsection{Illustrating the Reduction to Ski Rental in Binary Prediction}
\label{ssec:binary}

Before proving~\cref{thm:2ApproxRegret,thm:SkiRentalRegret}, we first illustrate the basic idea of our approach and reduction for the binary prediction setting. This is aided by the observation that the regret of $\ftl$ in this setting admits a simple geometric interpretation: for any sequence, and any time $t$, we have that $\ftlsum$ is equal to 1/2 times the number of `lead changes' up to time $t$; where a lead change is a time step $i$ where the count of $1$s and $0$s in the (sub)sequence $y^{i-1}$ is equal (see Figure~\ref{fig:proof_intuition}(a)). 
\begin{corollary}[$\ftl$ for binary prediction] \label{cor:BinPred}
In binary prediction, for any sequence $y\in\{0,1\}^n$ and any time $t\leq n$, define the lead-change counter
\[
c(y^t) \defeq |\{0 \le j \le t ~s.t.~ \sum_{i=1}^j y_i = \sum_{i=1}^j (1-y_i)\}|.
\]
Then we have $\ftlsum_t = \frac12 c(y^{t-1})$ and thus $\Reg(\ftl, y^n) = \frac12 c(y^{n-1})$. 
\end{corollary}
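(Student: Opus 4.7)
The plan is to apply the FTL regret decomposition from Lemma~\ref{lem:FTLRegret} directly to the binary prediction loss $\ell_i(a)=|a-y_i|$ and identify, term by term, when the summand $L_i(a^*_{i-1}) - L_i(a^*_i)$ is nonzero. By Lemma~\ref{lem:FTLRegret},
\[
\ftlsum_t \;=\; \textstyle\sum_{i=1}^t \bigl(L_i(a^*_{i-1}) - L_i(a^*_i)\bigr),
\]
so the task reduces to computing one summand at a time.

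First I would record the closed form of $a^*_j$ and $L_j(a^*_j)$ for binary prediction. Writing $S_j = \sum_{k\le j} y_k$ and $Z_j = j - S_j$, the minimizer of $L_j(a)=S_j(1-a)+Z_j a$ on $[0,1]$ is $a^*_j=\mathbf{1}\{S_j>Z_j\}$ when $S_j\ne Z_j$, and $a^*_j=1/2$ when $S_j=Z_j$ (using the paper's tie-breaking rule); in either case $L_j(a^*_j)=\min(S_j,Z_j)$.

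Next I would do the case analysis on whether $y^{i-1}$ is tied at time $i$. If $S_{i-1}\ne Z_{i-1}$, then $a^*_{i-1}\in\{0,1\}$ agrees with the majority, and appending one bit $y_i$ either preserves the strict majority (so $a^*_i=a^*_{i-1}$) or produces a tie (so $S_i=Z_i$ and both $L_i(a^*_{i-1})=Z_i$ if $a^*_{i-1}=1$, resp.\ $S_i$, equal $i/2=L_i(1/2)=L_i(a^*_i)$); a strict lead cannot flip in a single step since counts move by one. Either way the summand is $0$. If instead $S_{i-1}=Z_{i-1}$, then $a^*_{i-1}=1/2$, so $L_i(1/2)=L_{i-1}(1/2)+1/2=(i-1)/2+1/2=i/2$, while $a^*_i=y_i$ and $L_i(y_i)=L_{i-1}(y_i)+0=(i-1)/2$; hence the summand equals $1/2$. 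Summing yields
\[
\ftlsum_t \;=\; \tfrac{1}{2}\,\bigl|\{\,1\le i\le t : S_{i-1}=Z_{i-1}\,\}\bigr| \;=\; \tfrac{1}{2}\,\bigl|\{\,0\le j\le t-1 : S_j=Z_j\,\}\bigr| \;=\; \tfrac{1}{2}\,c(y^{t-1}),
\]
after the index shift $j=i-1$ and noting that $j=0$ is counted (vacuously tied), consistent with $\ftlsum_1=1/2$.

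The main subtlety, more than an obstacle, is the tied case: the chosen convention $a^*_j=1/2$ on ties is what pins down the per-step contribution to exactly $1/2$ and makes the identity an equality rather than an inequality; one needs to check both $y_i=0$ and $y_i=1$ sub-cases to see they contribute the same $1/2$. Everything else is bookkeeping on $S_j$ vs.\ $Z_j$. The statement $\Reg(\ftl,y^n)=\ftlsum_n=\tfrac{1}{2}c(y^{n-1})$ then follows from $\ftlsum_n=\Reg(\ftl,\ell^n)$ noted after~\eqref{eq:FTLRegEquality}.
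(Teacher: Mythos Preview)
Your proposal is correct and follows exactly the approach the paper indicates: the paper simply states that the corollary ``follows from the regret decomposition in Lemma~\ref{lem:FTLRegret}'' without spelling out the case analysis, and your proof supplies precisely those details. The per-step computation---zero contribution when the prefix is not tied, and a contribution of $1/2$ when it is (under the $a^*_j=1/2$ tie-breaking convention)---is the intended argument.
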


Corollary \ref{cor:BinPred} follows from the regret decomposition in Lemma \ref{lem:FTLRegret}. Furthermore, since the losses of 0s and 1s are equal at a lead change, it also follows that at a lead change $t$, the anytime regret $\ftlsum_t$ is also equal to the hindsight regret incurred by $\ftl$ up to time $t$. Since $\ftlsum_t$ only increases in value at lead changes, if the $\bob$ algorithm switches to $\wcalg$, it will only happen after a lead change, and thus $\bob$ behaves as if it had oracle knowledge of the regret of $\ftl$ from just the history up to the current time.

Consider the instantiation of $\bob$ where $\wcalg = \cover$ and $g(n) = \sqrt{\frac{n}{2 \pi}}$. As mentioned in Section~\ref{sec:intro}, a remarkable property of $\cover$ is that it is the true minimax optimal algorithm, where $\Reg(\Cover,y^n) = \sqrt{\frac{n}{2 \pi}} (1+o(1))$ for all sequences $y^n$, such that $g(n)$ is nearly equal to $\Reg(\Cover,y^n)$ regardless of the sequence~\citep{Cover1966}. 

It follows that $\bob$ is equivalent to an algorithm which starts with $\ftl$, plays it until the regret of $\ftl$ matches the minimax regret guaranteed by $\cover$ for the remaining sequence, and then switches to $\cover$ until the end. 
Let $t_{\mathrm{sw}}$ denote the last round $\bob$ plays $\ftl$ before switching to $\wcalg$ (with $t_{\mathrm{sw}}=n$ if it never switches).
For a single switch algorithm, the sequence which maximizes the regret is one that maximizes the $\ftl$ regret before the switch at $t_{\mathrm{sw}}$, and minimizes the $\ftl$ after the switch, as depicted in Figure \ref{fig:proof_intuition}(a). For such a sequence, $\ftlsum_t = t/4$ at lead changes $t$, such that the regret incurred by the algorithm is linear before the switch, matching the linear cost of renting skis in the ski rental problem. Note that $t_{\mathrm{sw}}$ will necessarily be $o(n)$ in such sequences as the time it takes until $\ftlsum_t \geq g(n)$ is linear in $g(n) = o(n)$. After the switch, $\cover$ will incur regret $\sqrt{\frac{n-t_{\mathrm{sw}}}{2 \pi}} (1+o(1)) = g(n) (1 + o(1))$, matching the fixed cost of buying skis at the switching point; Corollary \ref{cor:binary_pred_guarantees} follows as a result of this analysis. Furthermore, in the binary prediction setting, $\bob$ achieves the stronger notion of instance optimality stated in Definition \ref{def:instanceopt_binary}.

\begin{corollary} \label{cor:binary_pred_guarantees}
For all $y^n\in\{0,1\}^n$, $\bob$ with $\wcalg=\cover$ and $\theta = \sqrt{\frac{n}{2 \pi}}$ satisfies
\begin{align*}
    \Reg(\bob,y^n) &\leq 2\min\{\Reg(\ftl,y^n),\Reg(\cover,y^n)\} + 1.
\end{align*}
$\bob$ with $\wcalg=\cover$ and $\theta = \sqrt{\frac{n}{2\pi}}\ln(1+(e-1)U)$ for $U\sim\text{Uniform}[0,1]$ satisfies
\begin{align*}
    \Reg(\bob,y^n)
    &\leq 1.58\min\{\Reg(\ftl,y^n),\Reg(\cover,y^n)\} + 1.
\end{align*}
\end{corollary}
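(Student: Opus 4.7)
The plan is to obtain Corollary \ref{cor:binary_pred_guarantees} as a direct specialization of Theorems \ref{thm:2ApproxRegret} and \ref{thm:SkiRentalRegret} to the binary prediction setting with the choice $\wcalg=\cover$ and $g(n)=\sqrt{n/(2\pi)}$. The key enabling fact, already recorded in the excerpt, is Cover's result that $\Reg(\cover,y^n)=\sqrt{n/(2\pi)}(1+o(1))$ holds for \emph{every} sequence $y^n\in\{0,1\}^n$, so $\cover$ is an \emph{equalizer} policy. This means that the quantity $g(n)$ which appears in the general bound is, up to lower-order terms, equal to the per-sequence regret $\Reg(\cover,y^n)$ itself, collapsing the gap between the weaker Definition \ref{def:instanceopt_wc} and the stronger Definition \ref{def:instanceopt_binary}.

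First I would verify the hypotheses of the two general theorems. Monotonicity of $g(n)=\sqrt{n/(2\pi)}$ is immediate, and Cover's equalizer property gives both $\sup_{y^n}\Reg(\cover,y^n)\le g(n)(1+o(1))$ and, simultaneously, the reverse inequality $\Reg(\cover,y^n)\ge g(n)(1-o(1))$ on every sequence. I would then invoke Theorem \ref{thm:2ApproxRegret} with threshold $\theta=g(n)=\sqrt{n/(2\pi)}$ to obtain
\begin{equation*}
\Reg(\bob,y^n)\le 2\min\{\Reg(\ftl,y^n),g(n)\}+1,
\end{equation*}
and Theorem \ref{thm:SkiRentalRegret} with randomized threshold $\theta=g(n)\ln(1+(e-1)U)$ to obtain the analogous bound with constant $e/(e-1)\le 1.58$.

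To finish, I would rewrite $g(n)$ on the right-hand side in terms of $\Reg(\cover,y^n)$ using the equalizer identity $g(n)=\Reg(\cover,y^n)(1+o(1))$. The $(1+o(1))$ slack from the equalizer approximation, together with any similar slack in the worst-case bound $\sup_{y^n}\Reg(\cover,y^n)\le g(n)$, contributes only an additive $o(\sqrt{n})$ term, which can be absorbed into the existing additive constant (the ``$+1$'') in the statement; alternatively, one can give a slightly more careful bookkeeping of the additive slack or, most cleanly, simply restate the result with the additive term absorbed into a generic $o(1)$. This yields exactly the two inequalities claimed in Corollary \ref{cor:binary_pred_guarantees}.

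The main obstacle is essentially bookkeeping: making sure the $(1+o(1))$ factor in Cover's equalizer bound is handled so that the right-hand side genuinely reads $\min\{\Reg(\ftl,y^n),\Reg(\cover,y^n)\}$ rather than $\min\{\Reg(\ftl,y^n),g(n)\}$. There is no genuinely new argument beyond the two general theorems and the equalizer property; the entire point of the corollary is to illustrate how instance optimality in the strong sense of Definition \ref{def:instanceopt_binary} falls out of the generic $\bob$ guarantee precisely because $\cover$ achieves the minimax regret \emph{pointwise} on every input sequence.
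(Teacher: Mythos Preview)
Your proposal is correct and follows essentially the same route as the paper: both derive the corollary from the ski-rental analysis (packaged in Theorems~\ref{thm:2ApproxRegret} and~\ref{thm:SkiRentalRegret}) together with the equalizer property of $\cover$, so that $g(n)$ can be identified with $\Reg(\cover,y^n)$ on every sequence. The bookkeeping concern you raise about the $(1+o(1))$ factor is legitimate and the paper is equally informal on this point; the cleanest fix is the one you hint at---take $g(n)=f_n$ exactly (since $\Reg(\cover,y^n)=f_n$ for \emph{all} $y^n$, the hypothesis $\sup_{y^n}\Reg(\cover,y^n)\le g(n)$ holds with equality), and then the substitution $g(n)=\Reg(\cover,y^n)$ is exact with no asymptotic slack to absorb.
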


\begin{figure}[!t]
  \begin{minipage}{0.3\textwidth} 
    \centering
    \includegraphics[height=\linewidth]{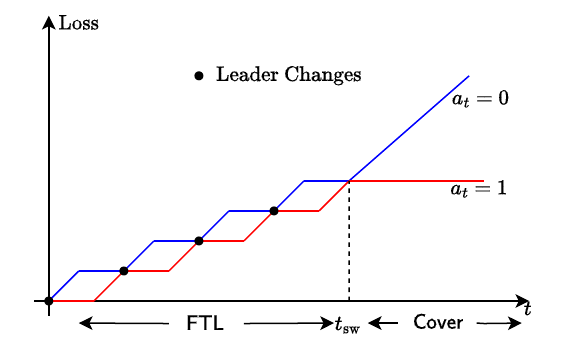}
  \end{minipage}%
  \hspace{1in}
  \begin{minipage}{.37\textwidth} 
    \centering
    \vspace{1.3em}
    \includegraphics[height=\linewidth]{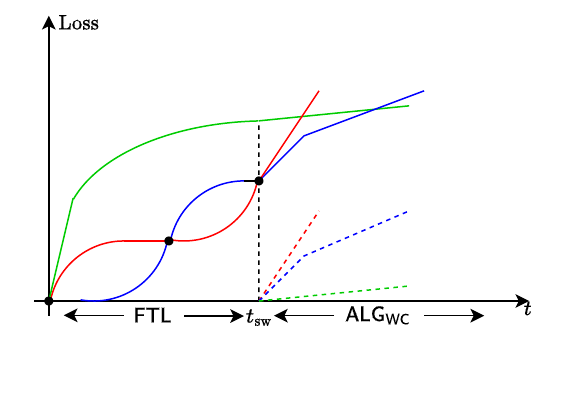}
  \end{minipage}%
\captionsetup{format=plain}
  \vspace{-0.5em}\caption{\it\footnotesize Figure \ref{fig:proof_intuition}(a) on the left shows the worst case instance in binary prediction for an algorithm which starts with $\ftl$ and switches at most once during the time horizon to $\cover$. Figure\ref{fig:proof_intuition}(b) on the right depicts in a prediction with experts setting how $\bob$ resets the losses after the switch from $\ftl$ to $\wcalg$.}
  \label{fig:proof_intuition}
  \vspace{-0.5cm}  
\end{figure}

\subsection{Proofs for General Online Learning}

In the general online learning setting, the proof is nearly the same, with the reduction to the ski-rental problem captured by Lemma \ref{lemma:regret_skirental}.
\begin{lemma} \label{lemma:regret_skirental}
Let $t_{\mathrm{sw}} \defeq \min_{1 \le t \le n-1} \ftlsum_t > \theta$ denote the last round $\bob$ plays $\ftl$ before switching to $\wcalg$ (with $t_{\mathrm{sw}}=n$ if it never switches). $\bob$ incurs regret bounded by
\[\Reg(\bob, \ell^n) \leq \Reg(\ftl, \ell^{t_{\mathrm{sw}}}) + \Reg(\wcalg, \ell_{t_{\mathrm{sw}}+1}^n) \leq \theta + \Reg(\wcalg, \ell_{t_{\mathrm{sw}}+1}^n) + 1.\]
\end{lemma}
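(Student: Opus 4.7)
The plan is to split the proof into the two stated inequalities, each following from a different structural observation about $\bob$. The first inequality comes from a per-phase regret decomposition combined with superadditivity of $\inf$ over sums; the second comes from the stopping rule combined with a uniform bound on the per-round increment of $\ftlsum_t$.

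For the first inequality, I would start by unrolling the definition of $\bob$: for $t \leq t_{\mathrm{sw}}$ the algorithm plays $a_t = a^*_{t-1}$ (the $\ftl$ action on $\ell^{t-1}$), and for $t > t_{\mathrm{sw}}$ it runs $\wcalg$ on the reset subsequence $\ell_{t_{\mathrm{sw}}+1}^n$. Writing the total loss of $\bob$, then adding and subtracting $\inf_a \sum_{t=1}^{t_{\mathrm{sw}}} \ell_t(a)$ and $\inf_a \sum_{t=t_{\mathrm{sw}}+1}^n \ell_t(a)$, one obtains
\[
\Reg(\bob,\ell^n) = \Reg(\ftl, \ell^{t_{\mathrm{sw}}}) + \Reg(\wcalg, \ell_{t_{\mathrm{sw}}+1}^n) + \Delta,
\]
where $\Delta \defeq \inf_a \sum_{t=1}^{t_{\mathrm{sw}}} \ell_t(a) + \inf_a \sum_{t=t_{\mathrm{sw}}+1}^n \ell_t(a) - \inf_a L_n(a)$. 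Evaluating the two per-phase infima at the global minimizer $a^*_n$ shows $\Delta \leq 0$ (this is superadditivity of $\inf$ over sums of functions), yielding the first claimed inequality.

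For the second inequality, it suffices to bound $\Reg(\ftl, \ell^{t_{\mathrm{sw}}}) = \ftlsum_{t_{\mathrm{sw}}}$ by $\theta + 1$. By the stopping rule in Algorithm~\ref{alg:bob} we have $\ftlsum_{t_{\mathrm{sw}}-1} \leq \theta$, so it remains to verify that each one-step jump of $\ftlsum_t$ is at most $1$. Writing
\[
\ftlsum_t - \ftlsum_{t-1} = L_t(a^*_{t-1}) - L_t(a^*_t) = \bigl(L_{t-1}(a^*_{t-1}) - L_{t-1}(a^*_t)\bigr) + \bigl(\ell_t(a^*_{t-1}) - \ell_t(a^*_t)\bigr),
\]
the first summand is non-positive by optimality of $a^*_{t-1}$ for $L_{t-1}$, and the second is at most $1$ since $\ell_t \in [0,1]$. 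Thus $\ftlsum_{t_{\mathrm{sw}}} \leq \theta + 1$, completing the second inequality.

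The main obstacle I anticipate is confirming that the loss reset inside $\bob$ really does license treating the post-switch rounds as a fresh online-learning instance on which $\wcalg$'s own worst-case bound applies unmodified. This is built into the algorithm's specification (we literally feed $\wcalg$ the subsequence $\ell_{t_{\mathrm{sw}}+1}^n$ starting from cumulative loss zero), but the accounting has to carefully align $\wcalg$'s per-phase benchmark $\inf_a \sum_{t=t_{\mathrm{sw}}+1}^n \ell_t(a)$ with the global benchmark $\inf_a L_n(a)$; this is exactly where the slack $\Delta \leq 0$ from superadditivity appears and saves the argument.
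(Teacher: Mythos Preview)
Your proof is correct and follows essentially the same approach as the paper: the paper splits the regret at $t_{\mathrm{sw}}$ and replaces the global benchmark $a^*_n$ by the two per-phase minimizers (your superadditivity step), then bounds $\Reg(\ftl,\ell^{t_{\mathrm{sw}}}) = \ftlsum_{t_{\mathrm{sw}}}$ by $\theta+1$ via exactly the same one-step increment decomposition you wrote. The only cosmetic difference is that the paper writes the superadditivity inline by directly substituting $a^*_{t_{\mathrm{sw}}}$ and $a^*_{t_{\mathrm{sw}}+1:n}$ for $a^*_n$, whereas you isolate it as a separate slack term $\Delta\le 0$.
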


\begin{proof}
We separately bound the regret of $\bob$ before the switch and after the switch,
\begin{align*}
\Reg(\bob, \ell^n)
&=  \big(\textstyle\sum_{t=1}^{t_{\mathrm{sw}}} \ell_t(a_t) - \textstyle\sum_{t=1}^{t_{\mathrm{sw}}} \ell_t(a^*_n)\big) + \big(\textstyle\sum_{t=t_{\mathrm{sw}}+1}^n \ell_t(a_t) - \textstyle\sum_{t=t_{\mathrm{sw}}+1}^n \ell_t(a^*_n)\big) \\
&\stackrel{(a)}{\le} \big(\textstyle\sum_{t=1}^{t_{\mathrm{sw}}} \ell_t(a_t) - \textstyle\sum_{t=1}^{t_{\mathrm{sw}}} \ell_t(a^*_{t_{\mathrm{sw}}})\big) + \big(\textstyle\sum_{t=t_{\mathrm{sw}}+1}^n \ell_t(a_t) - \textstyle\sum_{t=t_{\mathrm{sw}}+1}^n \ell_t(a^*_{t_{\mathrm{sw}}+1:n})\big) \\
&= \Reg(\ftl, \ell^{t_{\mathrm{sw}}}) + \Reg(\wcalg, \ell_{t_{\mathrm{sw}}+1}^n).
\end{align*}
The first term is upper bounded by $\Reg(\ftl, \ell^{t_{\mathrm{sw}}})$ since $L_{t_{\mathrm{sw}}}(a^*_n) \ge L_{t_{\mathrm{sw}}}(a^*_{t_{\mathrm{sw}}})$ by definition, as $a^*_{t_{\mathrm{sw}}}$ is the minimizer of $L_{t_{\mathrm{sw}}}$, and furthermore $\bob$ always plays according to $\ftl$ in rounds up to $t_{\mathrm{sw}}$. The second term is upper bounded by $\Reg(\wcalg, \ell_{t_{\mathrm{sw}}+1}^n)$ because $\sum_{t=t_{\mathrm{sw}}+1}^n \ell_t(a^*_{t_{\mathrm{sw}}+1:n}) \leq \sum_{t=t_{\mathrm{sw}}+1}^n \ell_t(a^*_n)$ as $a^*_{t_{\mathrm{sw}}+1:n}$ is the minimizer of the losses after $t_{\mathrm{sw}}$, and at time $t > t_{\mathrm{sw}}$, $\bob$ plays according to $\wcalg$ on the sequence of losses limited to $\ell_{t+1}^n$. This illustrates the important role of resetting the losses after the switch as depicted in Figure \ref{fig:proof_intuition}(b).
Using the fact that $\ftlsum_{t_{\mathrm{sw}}-1} \le \theta$ and $L_{t_{\mathrm{sw}}-1}(x^*_{t_{\mathrm{sw}}-1}) \le L_{t_{\mathrm{sw}}-1}(x^*_{t_{\mathrm{sw}}})$, it follows that \\
$~~~~~~\Reg(\ftl, \ell^{t_{\mathrm{sw}}}) = \ftlsum_{t_{\mathrm{sw}}-1} + L_{t_{\mathrm{sw}}-1}(a^*_{t_{\mathrm{sw}}-1}) - L_{t_{\mathrm{sw}}-1}(x^*_{t_{\mathrm{sw}}}) + \ell_{t_{\mathrm{sw}}}(a^*_{t_{\mathrm{sw}}-1}) - \ell_{t_{\mathrm{sw}}}(x^*_{t_{\mathrm{sw}}}) \leq \theta + 1.$
\end{proof}

The reduction to ski-rental is again immediate due to the properties that $\ftlsum_t := \Reg(\ftl,\ell^{t_{\mathrm{sw}}})$ is adapted, monotone, and is an anytime lower bound for $\Reg(\ftl,\ell^n)$, while remaining an upper bound on the true regret incurred by $\ftl$ up to time $t$. As a result, the algorithm can pretend that it truly observes the regret it incurs at each time up to the switching time $t_{\mathrm{sw}}$. After the switch, $\bob$ incurs regret $\Reg(\wcalg, \ell_{t_{\mathrm{sw}}+1}^n)$ which is upper bounded by $g(n-t_{\mathrm{sw}}) \leq g(n)$ by assumption.\\


\begin{proof}[Proof of Theorem~\ref{thm:2ApproxRegret}]
This follows immediately from Lemma \ref{lemma:regret_skirental}. For $\ell^n$ such that $\Reg(\ftl,\ell^n) \le g(n)$, $\bob$ will never switch to $\cover$ as $\ftlsum_t \le \Reg(\ftl, \ell^n) \leq g(n)$ such that $\Reg(\bob,\ell^n) = \min\{\Reg(\ftl, \ell^n), g(n)\}$. For $\ell^n$ such that $\Reg(\ftl,\ell^n) > g(n)$, by Lemma \ref{lemma:regret_skirental}, $\Reg(\bob, \ell^n)\leq g(n) + g(n-t_{\mathrm{sw}}) +1 \leq 2 g(n) +1$. \\
\end{proof}

%

\begin{proof}[Proof of Theorem~\ref{thm:SkiRentalRegret}]
The proof uses a primal-dual approach, similar to that of~\cite{karlin1994competitive} for ski-rental.
For a given sequence of loss functions $\ell^n$, we use the shorthand $r= \Reg(\ftl,\ell^n)$ and $g=g(n)$.
Also, for our given choice of cumulative distribution function $F_n$, the corresponding probability density function is given by $f(z) = \frac{e^{z/g}}{g(e-1)}$ for $z\in[0,g]$. 
As before, let $t_{\mathrm{sw}} \defeq \min_{1 \le t \le n-1} \ftlsum_t > \theta$ be the (random) round where $\bob$ switches from $\ftl$ to $\wcalg$ (with $t_{\mathrm{sw}}=n$ if it never switches). Then by Lemma \ref{lemma:regret_skirental} we have
\begin{align*}
\frac{\Reg(\bob,\ell^n) -1}{\min\{\Reg(\ftl,\ell^n),g(n)\}}\leq \begin{cases} 
   \frac{\theta + g}{\min\{r,g\}} & \text{if } t_{\mathrm{sw}} < n \\
   1       & \text{if } t_{\mathrm{sw}} = n
  \end{cases}
\end{align*}
where the second case follows from the fact that $\theta\in[0,g]$, and hence if we never switch, then $r\le g(n)$. 
Taking expectation over $\theta$, we have
\begin{align*}
&\frac{\E_{\theta}\left[\Reg(\bob,\ell^n)\right]-1}{\min\{\Reg(\ftl,\ell^n),g\}}
\le 
\begin{cases} 
   \int_0^r   \frac{x + g}{r}f(x)dx + 1 - F(r)  & \text{if } r \le g \\
   \int_0^g   \frac{x + g}{g}f(x)dx        & \text{if } r > g
  \end{cases}
\end{align*}
Let $\phi(z) \defeq  \int_0^z \frac{(x + g)}{z}f(x)dx + 1 - F(z) $ for $z\in[0,g]$; then $\frac{\E_{\theta}\left[\Reg(\bob,\ell^n)\right]-1}{\min\{\Reg(\ftl,\ell^n),g\}} \leq \max_{z\in[0,g]}\phi(z)$. Moreover, we can differentiate to get $z^2\phi'(z)= gzf(z) - \int_0^z(x + g)f(x)dx$. Substituting our choice of $f$ in this expression, we get
\begin{align*}
\frac{z^2d\phi(z)}{dz} &= \frac{ze^{z/g}}{(e-1)} - \int_0^z(x + g)\frac{e^{x/g}}{g(e-1)}dx
=\frac{ze^{z/g} - g\int_0^{z/g}(w + 1)e^{w}dw}{(e-1)}=0
\end{align*}
Thus, $\phi(z)$ is constant for all $z\in[0,g]$ and $\phi(g) = \frac{1}{g(e-1)}\int_0^g(1+x/g)e^{x/g}dx = \frac{e}{e-1}$. 
\end{proof}
\section{Instance Optimal Online Learning: Converse} \label{sec:converse} 
In this section, we investigate fundamental limits on the instance-optimal regret guarantees achievable by \emph{any} algorithm. More precisely, in the setting of binary prediction, we ask what is the smallest value of $\gamma_n$ satisfying
\begin{align} \label{eq:LBQues}
    \Reg(\alg, y^n) &\le \g_n \min\{\Reg(\ftl,y^n),\Reg(\Cover,y^n)\}
    = \g_n \min\left\{\tfrac12 c(y^{n-1}), f_n\right\}
\end{align}
for all $y^n$, where $f_n \defeq  \Reg(\Cover,y^n) = \sqrt{\frac{n}{2 \pi}} (1+o(1))$. 
We show the following lower bound.
\begin{theorem}[Lower bound on the competitive ratio] \label{thm:LowerBdCR}
    \[
    \lim_{n \to \infty} \g_n \ge \Big(1-e^{-1/\pi}+2Q\big(\sqrt{2/\pi}\big)\Big)^{-1} \approx 1.4335
    \]
    where the $Q(\cdot)$ function is $Q(x) \defeq \frac{1}{\sqrt{2\pi}}\int_{x}^{\infty} e^{-t^2/2}$. 
\end{theorem}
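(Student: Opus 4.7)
The approach is a Yao-style reduction paired with a Donsker / local-time computation. If a (possibly randomized) algorithm $\alg$ satisfies the competitive-ratio bound with constant $\gamma_n$, then taking expectation under any distribution $\mu$ on $\{0,1\}^n$ gives
\begin{equation*}
\gamma_n \ge \frac{\E_{Y^n \sim \mu}[\Reg(\alg, Y^n)]}{\E_{Y^n \sim \mu}[\min\{\Reg(\ftl, Y^n), f_n\}]}.
\end{equation*}
I choose $\mu$ to be the i.i.d.\ $\Ber(1/2)$ law and show that, after rescaling by $\sqrt n$, the numerator and denominator converge to explicit constants whose ratio equals $(1 - e^{-1/\pi} + 2Q(\sqrt{2/\pi}))^{-1}$.

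\emph{Lower bound on the numerator.} Against i.i.d.\ fair bits, the action $a_t$ depends only on $Y^{t-1}$ (and internal randomness) and is hence independent of $Y_t$, so $\E_\mu |a_t - Y_t| = 1/2$ and $\E_\mu L_n(\alg, Y^n) = n/2$ for \emph{any} algorithm. The hindsight loss is $\min(S_n, n-S_n)$ with $S_n \sim \Bin(n,1/2)$, so
\begin{equation*}
\E_\mu \Reg(\alg, Y^n) \;\ge\; \tfrac{n}{2} - \E[\min(S_n, n-S_n)] \;=\; \E|S_n - n/2| \;=\; \sqrt{n/(2\pi)}\,(1+o(1))
\end{equation*}
by the CLT with $L^1$-convergence (immediate from the bounded second moment of $(S_n - n/2)/\sqrt n$). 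The argument is per-realization of the algorithm's internal randomness and extends to randomized $\alg$ by averaging.

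\emph{Upper bound on the denominator.} By Corollary~\ref{cor:BinPred}, $\Reg(\ftl, Y^n) = \tfrac12 c(Y^{n-1})$, where $c(Y^{n-1})$ counts the visits of the simple symmetric walk $T_j \defeq 2\sum_{i\le j} Y_i - j$ to the origin for $j \in \{0, 1, \dots, n-1\}$. The classical discrete local-time limit -- Donsker's invariance principle combined with Levy's identity $(|B_t|, L_t^0) \stackrel{d}{=} (M_t - B_t, M_t)$ for Brownian local time at zero, which gives $L_1^0 \stackrel{d}{=} |B_1|$ -- yields $c(Y^{n-1})/\sqrt n \Rightarrow |Z|$ with $Z \sim N(0,1)$. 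Combined with $f_n/\sqrt n \to 1/\sqrt{2\pi}$ and the uniform bound $\min\{c(Y^{n-1})/(2\sqrt n), f_n/\sqrt n\} \le f_n/\sqrt n$, bounded convergence gives
\begin{equation*}
\tfrac{1}{\sqrt n}\,\E_\mu \!\left[\min\{\tfrac12 c(Y^{n-1}), f_n\}\right] \;\longrightarrow\; \E\!\left[\min\!\left(\tfrac{|Z|}{2},\tfrac{1}{\sqrt{2\pi}}\right)\right].
\end{equation*}
Splitting on the event $\{|Z| \ge \sqrt{2/\pi}\}$ and using $\int_0^{a} z\phi(z)\,dz = (1 - e^{-a^2/2})/\sqrt{2\pi}$ at $a = \sqrt{2/\pi}$, the right-hand side evaluates to $\tfrac{1}{\sqrt{2\pi}}\bigl(1 - e^{-1/\pi} + 2Q(\sqrt{2/\pi})\bigr)$.

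Dividing the two limits gives $\liminf_{n\to\infty} \gamma_n \ge (1 - e^{-1/\pi} + 2Q(\sqrt{2/\pi}))^{-1}$, matching the theorem. The main obstacle is the distributional limit $c(Y^{n-1})/\sqrt n \Rightarrow |Z|$: although classical, it requires care with the parity constraint ($T_j = 0$ forces $j$ even) and a genuine proof of distributional convergence (not just matching first moments). The cleanest route is to embed $T_j$ into Brownian motion via Skorokhod and invoke Levy's reflection theorem to identify the limit law; the subsequent exchange of limit and expectation is automatic from bounded convergence because of the deterministic cap $f_n$.
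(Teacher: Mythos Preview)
Your proof is correct and the overall skeleton matches the paper's: both pick the i.i.d.\ $\Ber(1/2)$ distribution, observe that \emph{every} predictor has expected loss exactly $n/2$ against fair coins (you prove this directly; the paper phrases it as Cover's ``balance'' condition), and hence reduce the lower bound to evaluating $f_n / \E[\min\{\tfrac12 c(\epsilon^{n-1}), f_n\}]$.

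Where you genuinely diverge is in computing the denominator. The paper works combinatorially: it invokes Feller's exact formula $p_{n,k} = 2^{-(n-k)}\binom{n-k}{n/2}$ for the zero-crossing distribution, derives a local-CLT estimate $p_{n,k} = \sqrt{2/(n\pi)}\,e^{-k^2/2n}(1+o(1))$ via Stirling, and then turns the discrete sums $\sum k\,p_{n,k}$ and $\sum p_{n,k}$ over $k \le 2f_{n+1}$ into Riemann integrals. You instead go through the invariance principle for local time: $c(Y^{n-1})/\sqrt{n} \Rightarrow |Z|$ via Donsker and L\'evy's identity $L_1^0 \stackrel{d}{=} |B_1|$, followed by bounded convergence (legitimate because of the deterministic cap $f_n/\sqrt{n}$). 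Your route is considerably shorter and more conceptual, and it makes transparent \emph{why} the half-normal appears; the paper's route is longer but fully self-contained and yields explicit error terms rather than a bare $o(1)$. Both land on the same limiting integral, and your evaluation of $\E[\min(|Z|/2,\,1/\sqrt{2\pi})]$ is correct.
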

Since binary prediction is a specific online learning problem, this also yields a fundamental lower bound for instance-optimality for general online learning.
Note particularly that $\g_n > 1$, implying that $(1+o(1))\min\{\Reg(\ftl),\Reg(\wcalg)\}$ regret is not possible to achieve. Thus, there is an inevitable multiplicative factor that must be paid in order to achieve an instance-optimal regret guarantee.

An equivalent way to state~\cref{eq:LBQues} is to find the smallest $\g_n$ for which a predictor $\{a_t(y^{t-1})\}_{t=1}^n$ satisfies for all $y\in\{0,1\}^n$
\begin{align}\label{eq:InstanceOptLossFn}
   \textstyle\sum_{t=1}^n |a_t(y^{t-1})-y_t| 
   &\le \g_n \min\big\{\tfrac12 c(y^{n-1}), f_n\big\} + \min\big\{\textstyle\sum_{t=1}^n y_i, n-\textstyle\sum_{t=1}^n y_i\big\}. 
\end{align}
In order to establish the values of $\gamma_n$ for which the loss function in the right hand side of~\cref{eq:InstanceOptLossFn} are achievable, we utilize the following result of~\cite{Cover1966}, which provides an exact characterization of the set of \emph{all} loss functions achievable in binary prediction. Formally, we say a function $\phi : \{0,1\}^n \to \mathbb{R}^+$ is \emph{achievable} in binary prediction if there exists a predictor/strategy $a_t: y^{t-1} \mapsto [0,1]$ that ensures $\sum_{t=1}^n |a_t(y^{t-1})-y_t| = \phi(y^n)\quad,\,\forall\,y^n\in\{0,1\}^n$. 
Then, we have the following characterization.
\begin{theorem}[\cite{Cover1966}]  
\label{thm:CoverAchievableFn}
Let $\e^n \sim \mathrm{Bern}\left(\frac{1}{2}\right)$ i.i.d. For $\phi$ to be achievable, it must satisfy the following:
\begin{itemize}
    \item Balance:  $\E[\phi(\e^n)] = \frac{n}{2}$ . 
    \item Stability: Let $\phi_t(y^t) \defeq \E[\phi(y^{t}\e_{t+1}^n)]$; then $|\phi_t(y^{t-1}0)-\phi_t(y^{t-1}1)| \le 1\,\forall t\in [n], \,y^t\in\{0,1\}^t$. 
    \end{itemize}
    Further any $\phi$ satisfying the above is realized by predictor  $a_t(y^{t-1}) = \frac{1+\phi_t(y^{t-1}0)-\phi_t(y^{t-1}1)}{2}
    $.
\end{theorem}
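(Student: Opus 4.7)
The plan is to prove both directions of this characterization (necessity and sufficiency), with the inversion formula for the predictor falling out of the necessity direction for free.

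For necessity, I would assume $\phi$ is realized by some predictor $\{a_t\}$, so $\phi(y^n) = \sum_{s=1}^n |a_s(y^{s-1}) - y_s|$, and compute $\phi_t(y^t) = \E[\phi(y^t\e_{t+1}^n)]$ by splitting the sum at $s=t$. Terms with $s \le t$ are measurable with respect to $y^t$ and pass through the expectation unchanged. For terms with $s > t$, I would condition on $\e_{t+1}^{s-1}$ and use that $\e_s \sim \mathrm{Bernoulli}(1/2)$ is independent of the past to get $\E[|a_s(\cdot) - \e_s| \mid \e^{s-1}_{t+1}] = \tfrac12 a_s + \tfrac12(1-a_s) = \tfrac12$ pointwise. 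This yields the clean identity
\begin{align*}
\phi_t(y^t) = \textstyle\sum_{s=1}^t |a_s(y^{s-1}) - y_s| + \tfrac{n-t}{2}.
\end{align*}
Setting $t=0$ gives balance, $\E[\phi(\e^n)] = n/2$. Evaluating the identity at $y_t = 0$ and $y_t = 1$ and subtracting yields $\phi_t(y^{t-1}0) - \phi_t(y^{t-1}1) = a_t(y^{t-1}) - (1 - a_t(y^{t-1})) = 2a_t(y^{t-1}) - 1$, whose absolute value is bounded by $1$ since $a_t \in [0,1]$, giving stability; solving for $a_t$ gives the stated inversion formula.

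For sufficiency, I would take $a_t(y^{t-1}) := \tfrac12(1 + \phi_t(y^{t-1}0) - \phi_t(y^{t-1}1))$ as the definition of the predictor. Stability guarantees $a_t \in [0,1]$, so this is a valid predictor, and the only remaining task is to verify that it realizes $\phi$ on every sequence. Here I would first record the one-step tower identity $\phi_{t-1}(y^{t-1}) = \tfrac12 \phi_t(y^{t-1}0) + \tfrac12 \phi_t(y^{t-1}1)$, which is immediate from the definition of $\phi_t$ via iterated expectation, and then case-split on $y_t \in \{0,1\}$ to show the per-step identity
\begin{align*}
|a_t(y^{t-1}) - y_t| = \tfrac12 + \phi_t(y^t) - \phi_{t-1}(y^{t-1}).
\end{align*}
Telescoping in $t$ then gives $\sum_{s=1}^n |a_s(y^{s-1}) - y_s| = \tfrac{n}{2} + \phi_n(y^n) - \phi_0 = \phi(y^n)$, where the last equality uses $\phi_n \equiv \phi$ and the balance condition $\phi_0 = \E[\phi(\e^n)] = n/2$.

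The main obstacle I anticipate is the per-step identity above: it is not a priori obvious that the two cases $y_t = 0$ and $y_t = 1$ produce the same clean expression. The symmetry arises because $|a_t - 0| = a_t$ while $|a_t - 1| = 1 - a_t$, and the sign of $\phi_t(y^{t-1}0) - \phi_t(y^{t-1}1)$ appearing in the definition of $a_t$ flips in exactly the compensating way. Once this identity is in hand, both directions telescope cleanly, and the balance hypothesis is revealed to play exactly the role of pinning down the correct additive constant $n/2$ in the telescoping sum, explaining why balance and stability together are not just necessary but also sufficient.
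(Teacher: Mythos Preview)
Your proof is correct in both directions: the necessity argument cleanly yields the identity $\phi_t(y^t) = \sum_{s\le t}|a_s-y_s| + (n-t)/2$, from which balance, stability, and the inversion formula all follow; and the sufficiency argument correctly verifies the per-step identity $|a_t - y_t| = \tfrac12 + \phi_t(y^t) - \phi_{t-1}(y^{t-1})$ in both cases and telescopes it using balance to recover $\phi$.

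There is nothing to compare against in the paper itself: this theorem is quoted from \cite{Cover1966} and is not proved in the paper, only invoked as a tool for the lower bound in Theorem~\ref{thm:LowerBdCR}. Your self-contained proof is therefore a genuine addition rather than a rederivation of something the authors wrote out.
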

As an immediate corollary, Theorem~\ref{thm:CoverAchievableFn} equips us with the \emph{exact} minimax optimal algorithm for binary prediction alluded to in~\cref{sec:intro}. 
Returning to our setting, from the balance condition in~\cref{thm:CoverAchievableFn}, for $\epsilon^n \sim \mathrm{Bern}(1/2)$ i.i.d. 
\begin{align*}
\g_n \E\big[\min\big\{\tfrac12 c(\e^{n-1}), f_n\big\}\big] 
+ \E\big[ \min\big\{\textstyle\sum_{t=1}^n \e_t, n-\textstyle\sum_{t=1}^n \e_t\big\}\big] \ge \frac{n}{2}.
\end{align*}
for the function in~\eqref{eq:InstanceOptLossFn} to be achievable. Using the definition of $f_n$, 
\begin{align*}
    \g_n \ge \frac{f_n}{\E\left[ \min\left\{\tfrac12 c(\e^{n-1}), f_n\right\}\right]} =  \frac{2f_n}{\E\left[\min\left\{c(\e^{n-1}), 2f_n\right\}]\right]}.
\end{align*}
The above bound immediately yields that $\g_n \ge 1$ as expected. We can further sharply characterize the asymptotics of $\g_n$, resulting in the stated lower bound. The full proof of Theorem~\ref{thm:LowerBdCR} is provided in Appendix~\ref{appendix:ConversePf}.

\section{Instance-Optimal Algorithms in Small-Loss Settings} \label{sec:SmallLoss}


So far, we have presented specializations of $\bob$ that achieve instance-optimality between $\ftl$ and the \emph{worst-case} regret $g(n)$. However, many worst-case algorithms can still adapt to the instance $\ell^n$ and achieve regret guarantees that are a function of the `difficulty' of the instance $\ell^n$. A common way to quantify this is difficulty is via \emph{small-loss bounds}, where the regret is upper bounded by $g(L^*)$ where $g(\cdot)$ as earlier is a monotonic increasing function and $L^* \defeq \min_{a \in \Ac} \sum_{t=1}^n \ell_{t}(a)$ is the loss achieved by the best action. Such guarantees imply that for sequences where there exists an action achieving low loss, the corresponding regret achieved is also low. Thus, a natural question is whether $\bob$ can be specialized to yield an algorithm that is constant competitive with respect to $\min\{\Reg(\ftl, \ell^n), g(L^*)\}$.  

As a starting point, if $L^*$ is known apriori, it is easy to achieve a $\frac{e}{e-1}$ approximation by simply using $\bob$ with (random) threshold $\theta = g(L^*)\ln(1+(e-1)U), U \sim \mathrm{Unif}[0,1]$; this is an immediate corollary of Theorem~\ref{thm:SkiRentalRegret}. When $L^*$ is not known, we use a guess-and-double argument to devise an algorithm that achieves the following instance-optimality guarantee. 

\begin{theorem}[Regret of $\bob$ for unknown small loss]  
\label{thm:Regret_small_loss}
      Let $\wcalg$ have small loss regret guarantees satisfying $\Reg(\wcalg, \ell^n) \le g(L^*)$ 
    for any $\ell^n$ where $L^* = \min_{j \in [m]} L_{tj}$, i.e. the loss achieved by the best expert in hindsight. Then, if we play $\bob$ for Small-Loss as stated in~Algorithm $2$, we have 
    \begin{align*}
    \Reg(\bob, \ell^n) 
    &\leq 2 \min\big(\Reg(\ftl, \ell^n), \textstyle\sum_{z=1}^{\log(1+L^*/\log m) +1} g(2^z \log m)\big) + O(\log L^*/\log m)
    \end{align*}
\end{theorem}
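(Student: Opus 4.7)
The plan is to analyze Algorithm~2 (a guess-and-double wrapper around $\bob$) through a per-phase decomposition: invoke Theorem~\ref{thm:2ApproxRegret} within each phase, then stitch the per-phase regrets together via a subadditivity argument. Algorithm~2 partitions the horizon into phases indexed by $z$; phase $z$ starts a fresh run of $\bob$ (losses reset to zero) using threshold $\theta_z = g(2^z \log m)$, and the phase terminates when either the horizon ends, or the in-phase best-expert cumulative loss $\min_j \sum_{t \in \mathrm{phase}} \ell_{tj}$ first exceeds $2^z \log m$. Each phase is therefore itself a valid instance of standard $\bob$, and since $L^*_{\mathrm{phase}} \le 2^z \log m$ by construction, $\wcalg$'s small-loss guarantee restricted to any post-switch subsequence of the phase is at most $g(2^z \log m) = \theta_z$. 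Plugging this into Lemma~\ref{lemma:regret_skirental} (equivalently, into Theorem~\ref{thm:2ApproxRegret}) yields $\Reg(\bob,\mathrm{phase}\,z) \le 2 g(2^z \log m) + 1$.

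First, I would bound the number of phases $Z$. Using the global optimal action $a^* = \arg\min_j L_j$: in each non-final completed phase, the in-phase best expert loss exceeds $2^z \log m$, and the in-phase loss of $a^*$ is at least the in-phase best expert loss, hence $\sum_{t \in \mathrm{phase}\,z} \ell_t(a^*) > 2^z \log m$. Summing over non-final phases and using that the $a^*$-losses across phases sum to $L^*$ gives $\sum_{z=1}^{Z-1} 2^z \log m < L^*$, so $Z \le \log_2(1 + L^*/\log m) + 1 = O\!\left(\log(L^*/\log m)\right)$.

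Next, the subadditivity inequality $\min_a \sum_z L^z(a) \ge \sum_z \min_a L^z(a)$ yields
\[
\Reg(\bob, \ell^n) \;\le\; \sum_{z=1}^Z \Reg(\bob, \mathrm{phase}\,z),
\]
and combining with the per-phase bound gives $\Reg(\bob, \ell^n) \le 2 \sum_z g(2^z \log m) + Z$, handling the second branch of the $\min$. For the $\Reg(\ftl, \ell^n)$ branch, I would use the monotone/adapted nature of $\ftlsum_t$: within phase $z$, if $\bob$ switches to $\wcalg$, then by construction the in-phase $\ftlsum_t$ exceeded $\theta_z = g(2^z \log m)$, and since $\ftlsum_t$ depends only on the loss sequence (not the algorithm's plays) and is monotone, the hypothetical full-phase FTL regret satisfies $\Reg(\ftl, \mathrm{phase}\,z) \ge g(2^z \log m)$, so $\Reg(\bob, \mathrm{phase}\,z) \le 2 g(2^z \log m) + 1 \le 2\Reg(\ftl, \mathrm{phase}\,z) + 1$. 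For phases with no switch, $\Reg(\bob, \mathrm{phase}\,z) = \Reg(\ftl, \mathrm{phase}\,z)$. Applying the same subadditivity to $\ftl$, $\sum_z \Reg(\ftl, \mathrm{phase}\,z) \le \Reg(\ftl, \ell^n)$, so summing gives $\Reg(\bob, \ell^n) \le 2\Reg(\ftl, \ell^n) + Z$. Taking the minimum of the two bounds and substituting $Z = O(\log(L^*/\log m))$ completes the proof.

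The primary obstacle I anticipate is making sure the subadditivity argument is applied in the correct direction given that losses are \emph{reset} between phases: per-phase regret is measured against the best in-phase action (not the globally best action), and one must verify that $\sum_z \min_a L^z(a) \le \min_a \sum_z L^z(a)$ implies the sum of per-phase regrets \emph{upper} bounds the global regret of $\bob$. A related subtlety is that the doubling trigger must be adapted and must use the in-phase best-expert loss rather than the algorithm's own loss --- otherwise the phase count would blow up in terms of $\bob$'s regret, creating a circularity. In the experts setting, $\min_j \sum_{t \in \mathrm{phase}} \ell_{tj}$ is computable online, so this check is routine but essential for closing the argument cleanly.
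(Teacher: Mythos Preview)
Your per-phase analysis and phase-count bound are on the right track, but the $\ftl$ branch of the argument contains a genuine gap. You write ``Applying the same subadditivity to $\ftl$, $\sum_z \Reg(\ftl, \mathrm{phase}\,z) \le \Reg(\ftl, \ell^n)$.'' This inequality goes the \emph{wrong way}. The very inequality you invoke, $\sum_z \min_a L^z(a) \le \min_a \sum_z L^z(a)$, says that splitting into phases can only \emph{increase} the sum of per-phase regrets relative to the global regret, so it gives $\sum_z \Reg(\ftl,\mathrm{phase}\,z) \ge \Reg(\ftl,\ell^n)$ --- exactly what let you upper-bound $\Reg(\bob,\ell^n)$ by the per-phase sum, but useless for upper-bounding the per-phase sum by the global $\ftl$ regret. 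A concrete failure: take binary prediction where the first phase is $1000$ zeros and the second phase is $0101\cdots01$ ($20$ bits). Fresh/per-phase $\ftl$ has phase regrets $0$ and $10$; global $\ftl$ has total regret~$0$.

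The fix is \emph{not} subadditivity but rather the additive decomposition of the $\ftl$ regret trace. In the paper's Algorithm~2, $\ftl$ is \emph{not} reset between epochs: the switching criterion uses $\ftlsum_{t_z:t} = \sum_{i=t_z}^{t}\big(L_i(a^*_{i-1})-L_i(a^*_i)\big)$ computed with the \emph{global} leaders $a^*_i$. These increments are nonnegative and, over disjoint intervals covering $[1,n]$, sum to exactly $\Reg(\ftl,\ell^n)$ by Lemma~\ref{lem:FTLRegret}; hence $\sum_z \ftlsum_{t_z:\tau_z-1} \le \Reg(\ftl,\ell^n)$ trivially. The paper then proves (Lemma~\ref{small_loss_regret_lemma}) the per-epoch bound $\Reg(\bob,\ell^n) \le \sum_z\big(\ftlsum_{t_z:\tau_z-1} + \Reg(\wcalg,\ell_{\tau_z+1}^{t_{z+1}-1}) + 1\big)$ directly from a telescoping decomposition of the \emph{global} regret, bypassing any per-phase best-action comparison. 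Note that $\ftlsum_{t_z:\tau_z}$ is \emph{not} the same as $\Reg(\ftl,\mathrm{phase}\,z)$; your proof conflates these two quantities. Relatedly, your description ``phase $z$ starts a fresh run of $\bob$ (losses reset to zero)'' and your phase-exit trigger (in-phase best-expert loss exceeds $2^z\log m$) both differ from Algorithm~2, which resets only the $\wcalg$ portion and exits an epoch when the algorithm's own cumulative loss violates the bound implied by the current guess $L^*_z$.
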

In particular, in the prediction with expert advice setting, we know that $\mathsf{Hedge}$ with a time-varying learning rate achieves $g(L^*) \equiv 2\sqrt{2 L^* \log m} + \kappa \log m$ (where $\kappa > 0$ is an absolute constant)~\citep{Auer--Cesa-Bianchi--Gentile2002, cesa2007improved}; this gives Corollary~\ref{cor:SqrtLStarInstantiation} in Section~\ref{sec:intro}. 

The intuitive idea behind the algorithm is to guess the value of $L^*$, and play $\bob$ with this guessed value while simultaneously keeping track of the regret incurred. Whenever the regret incurred exceeds the guarantee established by $\bob$ with known $L^*$ double the guessed value and start again. We use the notation $\wcalg(\ell_{t_1}^{t_2})$ to refer to the worst-case algorithm when the previously observed sequence is $\ell_{t_1}^{t_2}$; in particular this would be equivalent to the action recommended at time $t_2 + 1$ after throwing away all the observed losses before $t_1$. 
We let $\ftlsum_{t_1:t_2} = 
\sum_{i=t_1}^{t_2} (L_i(a^*_{i-1}) - L_i(a^*_i))$, which grows as the number of leader changes within $i \in [t_1, t_2]$. The algorithm's pseudocode is given in Algorithm 2 below, and a proof of Theorem~\ref{thm:Regret_small_loss} is provided in Appendix~\ref{appendix:smallLossPf}.

\begin{algorithm}[!ht]
\SetAlgoNoLine
\caption{$\bob$ for Small-Loss}
\label{alg:smallLossSMART}
\KwIn{Policies $\ftl, \wcalg$; Small-loss bound $g(\cdot)$} 
\For{$z = 0, 1, \dotsc$ (epochs)}
{
    Let $t = t_z\defeq$start time of $z^{th}$ epoch, $L^*_z \defeq 2^z \log m$ (current guess for $L^*$), $\ftlsum_{t_z:t_z-1}=0$ \;

    \While{$\ftlsum_{t_z:t-1} \le g(L^*_z)$}
    {
        Set $a_t = a_{t-1}^{*}$%
             \tcp*{Play $\ftl$}
        Observe $\ell_t(\cdot)$\; 
        Update $L_t(\cdot)=L_{t-1}(\cdot)+\ell_t(\cdot)$ and $\ftlsum_{t_z:t}=\ftlsum_{t_z:t-1}+(L_t(a_{t-1}^{*})-L_t(a_{t}^{*}))$ and $t = t+1$\;
    }

    Let $\tau_z := \min_{t \geq t_z} \ftlsum_{t_z:t} > g(L^*_z)$ and $t = \tau_z + 1$\;
    \tcp{Check if loss incurred by $\wcalg$ in this epoch violates the upper bound from $L_z^*$ is correct}
    \While{$\sum_{t=t_z}^t \langle a_t, \ell_t \rangle \leq L_z^* + 2\min\{\ftlsum_{t_z:t}, g(L_z^*)\} + 1$}
    {
        Set $a_t = \wcalg(\ell_{\tau_z+1}^{t-1})$ \tcp*{Play $\wcalg$ forgetting losses before $\tau_z+1$}
        Observe $\ell_t(\cdot)$\; 
        Update $L_t(\cdot)=L_{t-1}(\cdot)+\ell_t(\cdot)$ and $\ftlsum_{t_z:t}=\ftlsum_{t_z:t-1}+(L_t(a_{t-1}^{*})-L_t(a_{t}^{*}))$ and $t = t+1$\;
    }    
}
\end{algorithm}

\section{Conclusion} \label{sec:conclusion}

In this paper, we present $\bob$, a simple and black-box online learning algorithm that adapts to the data and achieves instance optimal regret with respect to $\ftl$ and any given worst-case algorithm. We show that $\bob$ only switches once from $\ftl$ to the worst-case algorithm, and attains a regret that is within a factor of $e/(e-1) \approx 1.58$ of the minimum of the regret of FTL and the minimax regret over all input sequences; we also show that any algorithm must incur an extra factor of at least $1.43$ establishing that our simple approach is surprisingly close to optimal. Furthermore, we extend SMART to incorporate a small-loss algorithm and obtain instance optimality with respect to the small-loss regret bound. Our approach relies on a novel reduction of instance optimal online learning to the ski-rental problem, and leverages tools from information theory and competitive analysis. Our work suggests several open problems for future research, such as finding instance optimal algorithms for bandit settings, or designing algorithms that can adapt to multiple reference algorithms besides FTL and minimax algorithms.

\section*{Acknowledgements}

This work is supported by NSF grants CNS-1955997, CCF-2337796 and ECCS-1847393, and AFOSR grant FA9550-23-1-0301. This work was partially done when the authors were visitors at the Simons Institute for the Theory of Computing, UC Berkeley. 

\appendix

\section{Omitted proofs from Section~\ref{sec:SmallLoss}} \label{appendix:smallLossPf}
In this Section, we will establish the proofs of Theorem~\ref{thm:Regret_small_loss} and Corollary~\ref{cor:SqrtLStarInstantiation}. 

Recall Algorithm 2, where $\wcalg(\ell_{t_1}^{t_2})$ refers to the worst-case algorithm when the previously observed sequence is $\ell_{t_1}^{t_2}$; in particular this would be equivalent to the action recommended at time $t_2 + 1$ after throwing away all the observed losses before $t_1$. 
We let $\ftlsum_{t_1:t_2} = 
\sum_{i=t_1}^{t_2} (L_i(a^*_{i-1}) - L_i(a^*_i))$, which grows as the number of leader changes within $i \in [t_1, t_2]$.

We first have the following decomposition of the regret for any algorithm $\alg$ that plays the sequence of actions $a^{\alg}_t$ at time $t$.
\begin{lemma} \label{regret_decomp}
The regret incurred by any sequence of actions $(a^\alg_t)_{t\in[n+1]}$ can be written as
\begin{align}
\Reg(\alg, \ell^n)
&= \sum_{t=1}^{n} \left(L_t(a^{\alg}_t) - L_{t}(a^{\alg}_{t+1}) \right),
\end{align}
where we let $a_{n+1}^{\alg} := a_n^*$.
\end{lemma}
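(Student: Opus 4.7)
The plan is to establish this identity by a pure telescoping argument, mirroring the manipulation used in Lemma~\ref{lem:FTLRegret} but with the generic action sequence $(a_t^{\alg})$ rather than the specific choice $a_t^{\ftl} = a^*_{t-1}$. The boundary convention $a_{n+1}^{\alg} \defeq a_n^*$ is exactly what is needed to make the telescoping close up cleanly.

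First, I would observe that since $a_n^*$ minimizes $L_n$ by definition, $\inf_{a \in \Ac} \sum_{t=1}^n \ell_t(a) = L_n(a_n^*)$, and therefore $\Reg(\alg, \ell^n) = \sum_{t=1}^n \ell_t(a_t^{\alg}) - L_n(a_n^*)$. Using $\ell_t(\cdot) = L_t(\cdot) - L_{t-1}(\cdot)$ (with the convention $L_0 \equiv 0$), the per-round losses can be rewritten as $\ell_t(a_t^{\alg}) = L_t(a_t^{\alg}) - L_{t-1}(a_t^{\alg})$.

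Next, I would split each summand by adding and subtracting $L_t(a_{t+1}^{\alg})$, yielding
\[
\sum_{t=1}^n \ell_t(a_t^{\alg}) = \sum_{t=1}^n \bigl(L_t(a_t^{\alg}) - L_t(a_{t+1}^{\alg})\bigr) + \sum_{t=1}^n \bigl(L_t(a_{t+1}^{\alg}) - L_{t-1}(a_t^{\alg})\bigr).
\]
The second sum telescopes to $L_n(a_{n+1}^{\alg}) - L_0(a_1^{\alg}) = L_n(a_n^*) - 0$ by the boundary convention, and cancels exactly with the $-L_n(a_n^*)$ term arising from the regret expression, producing the claimed identity.

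I do not expect any real obstacles: the argument is bookkeeping and entirely parallel to Lemma~\ref{lem:FTLRegret}, with the only wrinkle being to verify that both boundary contributions ($L_0(a_1^{\alg}) = 0$ at the lower end and $L_n(a_{n+1}^{\alg}) = L_n(a_n^*)$ at the upper end) match up as required. Since this holds by definition of $L_0$ and by the stated convention $a_{n+1}^{\alg} \defeq a_n^*$, the proof amounts to a one-line telescoping calculation.
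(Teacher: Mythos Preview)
Your proposal is correct and takes essentially the same approach as the paper: both arguments rewrite $\ell_t(a_t^{\alg}) = L_t(a_t^{\alg}) - L_{t-1}(a_t^{\alg})$ and telescope, using $L_0 \equiv 0$ and the convention $a_{n+1}^{\alg} = a_n^*$ to handle the boundary terms. The only cosmetic difference is that the paper first reindexes to obtain $L_n(a_n^{\alg}) + \sum_{t=1}^{n-1}(L_t(a_t^{\alg}) - L_t(a_{t+1}^{\alg})) - L_0(a_1^{\alg})$ and then absorbs the endpoint terms, whereas you add and subtract $L_t(a_{t+1}^{\alg})$ directly; the computations are the same.
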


\begin{proof}
\begin{align}
L_n(\alg)
&= \sum_{t=1}^n \ell_t(a^{\alg}_t) \\
&= \sum_{t=1}^n \left(L_t(a^{\alg}_t) - L_{t-1}(a^{\alg}_t) \right) \\
&= L_n(a^{\alg}_n) + \sum_{t=1}^{n-1} \left(L_t(a^{\alg}_t) - L_{t}(a^{\alg}_{t+1}) \right) - L_0(a^{\alg}_1).
\end{align}
This implies a regret decomposition of
\begin{align}
\Reg(\alg, \ell^n)
&= L_n(\alg) - L_n(a_n^*) \\
&= L_n(a^{\alg}_n) - L_n(a_n^*) + \sum_{t=1}^{n-1} \left(L_t(a^{\alg}_t) - L_{t}(a^{\alg}_{t+1}) \right) - L_0(a^{\alg}_1)
\end{align}
As $L_0(a^{\alg}_1) = 0$, and $a_{n+1}^{\alg} := a_n^*$, it follows that 
\begin{align}
\Reg(\alg, \ell^n)
&= \sum_{t=1}^{n} \left(L_t(a^{\alg}_t) - L_{t}(a^{\alg}_{t+1}) \right).
\end{align}
\end{proof}

Next, we use this decomposition to establish the regret of any algorithm $\alg$ that alternates between playing $\ftl$ and another algorithm $\wcalg$.

\begin{lemma} \label{small_loss_regret_lemma}
Consider an algorithm $\alg$ which alternates between playing $\ftl$ and $\wcalg$, where $\ftl$ is played in the intervals $\{[t_z, \tau_z]\}_{z\in[\zlast]}$, and $\wcalg$ is played in intervals $\{[\tau_z +1, t_{z+1}-1]\}_{z\in[\zlast]}$. The regret of $\alg$ is bounded by
\begin{align}
\Reg(\alg, \ell^n) \leq
\sum_z \left(\ftlsum_{t_z:\tau_z-1} + \Reg(\wcalg, \ell_{\tau_z+1}^{t_{z+1}-1}) + 1\right).
\end{align}
\end{lemma}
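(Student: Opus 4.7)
The plan is to extend the single-switch argument of \cref{lemma:regret_skirental} to the multi-epoch setting. The key conceptual point, implicit in Algorithm~$2$, is that each phase is played as a fresh instance on its own sub-sequence: $\wcalg$ on $[\tau_z+1, t_{z+1}-1]$ is restarted with empty history (made explicit by the notation $\wcalg(\ell_{\tau_z+1}^{t-1})$), and analogously $\ftl$ on $[t_z, \tau_z]$ is played with losses reset at $t_z$, so that $a_t^*$ is the minimizer of the sub-sequence cumulative loss and $\ftlsum_{t_z:t}$ is the regret of this fresh-restart $\ftl$ on rounds $[t_z, t]$, consistent with the initialization $\ftlsum_{t_z:t_z-1}=0$.

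First, I would decompose the global regret by epoch. Using the trivial lower bound
\[
L_n(a_n^*) \;\geq\; \sum_z \Bigl(\min_{a \in \Ac} \sum_{t=t_z}^{\tau_z} \ell_t(a) + \min_{a \in \Ac} \sum_{t=\tau_z+1}^{t_{z+1}-1} \ell_t(a)\Bigr),
\]
which holds because plugging $a = a_n^*$ on each sub-interval gives a quantity at least as large as the per-interval minimum, and combining with the natural per-epoch decomposition of $L_n(\alg)$, I get
\[
\Reg(\alg, \ell^n) \;\leq\; \sum_z \bigl(R_z^{\ftl} + R_z^{\wcalg}\bigr),
\]
where $R_z^{\ftl}$ and $R_z^{\wcalg}$ are the regrets of $\alg$ restricted to its $\ftl$ interval $[t_z,\tau_z]$ and $\wcalg$ interval $[\tau_z+1,t_{z+1}-1]$ respectively. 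By the fresh-restart behavior of $\wcalg$, the second piece is $R_z^{\wcalg} = \Reg(\wcalg, \ell_{\tau_z+1}^{t_{z+1}-1})$ directly by definition.

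Next I would handle the $\ftl$ piece. Applying \cref{lem:FTLRegret} to the sub-sequence $\ell_{t_z}^{\tau_z}$ identifies $R_z^{\ftl}$ exactly with $\ftlsum_{t_z:\tau_z}$. The last step is the one-step increment bound already used at the end of the proof of \cref{lemma:regret_skirental}: the difference $L_{\tau_z}(a_{\tau_z-1}^*) - L_{\tau_z}(a_{\tau_z}^*)$ equals $[L_{\tau_z-1}(a_{\tau_z-1}^*) - L_{\tau_z-1}(a_{\tau_z}^*)] + [\ell_{\tau_z}(a_{\tau_z-1}^*) - \ell_{\tau_z}(a_{\tau_z}^*)]$, with the first bracket non-positive by optimality of $a_{\tau_z-1}^*$ for $L_{\tau_z-1}$ and the second bounded by $1$ since losses lie in $[0,1]$. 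Hence $\ftlsum_{t_z:\tau_z} \leq \ftlsum_{t_z:\tau_z-1}+1$, and summing over $z$ yields the claim.

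The main obstacle is not algebraic but notational, namely pinning down the convention that within each epoch the cumulative loss $L_t$ and the leader $a_t^*$ used both in $\alg$'s $\ftl$ play and in the definition of $\ftlsum_{t_z:t}$ refer to the sub-sequence starting at $t_z$ rather than to the global cumulative loss. Under a global interpretation the FTL action at the start of a later epoch can be stuck on an action favored by earlier epochs and incur regret much larger than $\ftlsum_{t_z:\tau_z-1}+1$ on a short sub-interval; under the sub-sequence interpretation the bound reduces cleanly to a per-epoch invocation of the single-switch argument, and the rest is standard bookkeeping.
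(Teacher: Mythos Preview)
Your proposal rests on the sub-sequence interpretation of $\ftl$, but that interpretation is incorrect here: in Algorithm~2 the cumulative loss $L_t$ is never reset, the $\ftl$ action in every epoch is the \emph{global} leader $a_{t-1}^* = \argmin_a L_{t-1}(a)$, and $\ftlsum_{t_z:t} = \sum_{i=t_z}^{t}\bigl(L_i(a_{i-1}^*)-L_i(a_i^*)\bigr)$ is defined with these global quantities. You yourself note that under the global convention the per-interval split $\Reg(\alg,\ell^n)\le\sum_z(R_z^{\ftl}+R_z^{\wcalg})$ breaks: the global leader can be fixed by earlier epochs, making $R_z^{\ftl}$ arbitrarily larger than $\ftlsum_{t_z:\tau_z-1}+1$ on a short interval while each summand $L_i(a_{i-1}^*)-L_i(a_i^*)$ is zero. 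So the obstacle you identify is real, and choosing the other convention does not sidestep it but simply proves a different (and not the stated) lemma.

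The paper avoids this by using a different decomposition, \cref{regret_decomp}: $\Reg(\alg,\ell^n)=\sum_{t=1}^n\bigl(L_t(a_t)-L_t(a_{t+1})\bigr)$ with $a_{n+1}\defeq a_n^*$. During an $\ftl$ stretch both $a_t$ and $a_{t+1}$ are successive global leaders, so the summands are exactly $L_t(a_{t-1}^*)-L_t(a_t^*)$ and give $\ftlsum_{t_z:\tau_z-1}$ directly, with no comparison to a sub-interval optimum. The boundary term $L_{\tau_z}(a_{\tau_z-1}^*)$ is bounded by $L_{\tau_z}(a_{\tau_z}^*)+1$; since the next $\ftl$ action satisfies $a_{t_{z+1}}=a_{t_{z+1}-1}^*$, the $\wcalg$ block telescopes to $\sum_{t=\tau_z+1}^{t_{z+1}-1}\ell_t(a_t)+\min_a L_{\tau_z}(a)-\min_a L_{t_{z+1}-1}(a)\le \Reg(\wcalg,\ell_{\tau_z+1}^{t_{z+1}-1})$, using $\min_a L_{\tau_z}(a)-\min_a L_{t_{z+1}-1}(a)\le -\min_a\sum_{t=\tau_z+1}^{t_{z+1}-1}\ell_t(a)$. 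The crucial difference from your approach is that the comparator is never split into sub-interval minima; instead, the global-leader structure is exploited at the epoch boundaries.
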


\begin{proof}
We let $t_{\zlast} = n+1$, and $a_{n+1} = a_n^*$. We use~\cref{regret_decomp} and rearrange the terms by grouping them by the FTL periods and the $\wcalg$ periods.
\begin{align}
\Reg&(\alg, \ell^n) \nonumber\\
&= \sum_z \left( \sum_{t=t_z}^{t_{z+1}-1} (L_t(a_t) - L_t(a_{t+1})) \right) \nonumber\\
&= \sum_z \left( \sum_{t=t_z}^{\tau_z-1} (L_t(a_{t-1}^*) - L_t(a_{t}^*))
+ L_{\tau_z}(a_{\tau_z-1}^*) - L_{\tau_z}(a_{\tau_z+1}) + \sum_{t=\tau_z+1}^{t_{z+1}-1} (L_t(a_t) - L_t(a_{t+1})) \right) \nonumber\\
&= \sum_z \left( \sum_{t=t_z}^{\tau_z-1} (L_t(a_{t-1}^*) - L_t(a_t^*)) + L_{\tau_z}(a_{\tau_z-1}^*) + \sum_{t=\tau_z+1}^{t_{z+1}-1} \ell_t(a_t) - L_{t_{z+1}-1}(a_{t_{z+1}}) \right) \nonumber\\
&= \sum_z \sum_{t=t_z}^{\tau_z-1} (L_t(a_{t-1}^*) - L_t(a_{t}^*)) + \sum_z \left(\sum_{t=\tau_z+1}^{t_{z+1}-1} \ell_t(a_t) + L_{\tau_z}(a_{\tau_{z-1}}^*) - L_{t_{z+1}-1}(a_{t_{z+1}-1}^*) \right). \nonumber
\end{align}

We bound the first term by the FTL regret. Recall the notation
\begin{align}
\ftlsum_{t_z:\tau_z-1} := \sum_{t=t_z}^{\tau_z-1} (L_t(a_{t-1}^*) - L_t(a_{t}^*)).
\end{align}
Because we are playing FTL at both time $\tau_z$ and $\tau_z - 1$, it holds that 
\[L_{\tau_z}(a_{\tau_{z}-1}^*) = L_{\tau_z-1}(a_{\tau_{z}-1}^*) + \ell_{\tau_z}(a_{\tau_{z}-1}^*) \leq L_{\tau_z}(a_{\tau_{z}}^*) + 1.\]

To bound the second term, we will show that 
\begin{align}
\sum_{t=\tau_z+1}^{t_{z+1}-1} &\ell_t(a_t) + L_{\tau_z}(a_{\tau_{z-1}}^*) - L_{t_{z+1}-1}(a_{t_{z+1}-1}^*) 
\nonumber\\
&\leq \sum_{t=\tau_z+1}^{t_{z+1}-1} \ell_t(a_t) + L_{\tau_z}(a_{\tau_{z}}^*) - L_{t_{z+1}-1}(a_{t_{z+1}-1}^*) + 1 \\ 
&= \sum_{t=\tau_z+1}^{t_{z+1}-1} \ell_t(a_t) + \min_a L_{\tau_z}(a)  - \min_a L_{t_{z+1}-1}(a) +1\\ 
&\leq \sum_{t=\tau_z+1}^{t_{z+1}-1} \ell_t(a_t) - \min_a \left(L_{t_{z+1}-1}(a) - L_{\tau_z}(a)\right) + 1 \\
&= \Reg(\wcalg, \ell_{\tau_z+1}^{t_{z+1}-1}) +1. 
\end{align}
\end{proof}

We now complete the proof of Theorem 3 by showing that the conditions that determine the switching time between $\ftl$ and $\wcalg$ are appropriately chosen to upper bound $\ftlsum_{t_z:\tau_z-1}$ and $\Reg(\wcalg, \ell_{\tau_z+1}^{t_{z+1}-1})$.
Firstly, note that for any $z < \zlast$, $\Reg(\wcalg, \ell_{\tau_z+1}^{t_{z+1}-1}) > g(L_z^*)$, which implies that $L^* > L_z^*$. In particular, the epoch $\zlast-1$ was exited, which implies that\footnote{Here we have implicitly assumed that $L^* > \log m$---if not, then there is only one epoch, $\zlast = 0$ and the result is readily implied by Corollary 2.} 
\[
2^{\zlast-1} \log m \le L^* \implies \zlast \le \log\left(\frac{L^*}{\log m}\right) + 1
\]

By the stopping condition of the epoch, $\Reg(\wcalg, \ell_{\tau_z+1}^{t_{z+1}-1}) \leq \Reg(\wcalg, \ell_{\tau_z+1}^{t_{z+1}-2}) + 1 \leq g(L_z^*) + 1$, such that substituting into~\cref{small_loss_regret_lemma} implies
\begin{align}
\Reg(\bob, \ell^n)  \leq \sum_z \left(\ftlsum_{t_z:\tau_z-1} + g(L_z^*) + 2\right).
\end{align}


Also, it always holds that $\ftlsum_{t_z:\tau_z-1} \leq g(L_z^*)$ and for $z < \zlast$, $\ftlsum_{t_z:\tau_z} > g(L_z^*)$. Therefore 
\[\sum_z^{\zlast-1} g(L_z^*) < \sum_z \ftlsum_{t_z:\tau_z-1} \leq \Reg(\ftl, \ell^n)\]
and $\sum_z \ftlsum_{t_z:\tau_z-1} \leq \sum_z^{\zlast} g(L_z^*)$.

To put it all together, if $\sum_z^{\zlast} g(L_z^*) \leq \Reg(\ftl,\ell^n)$, then 
\begin{align}
\Reg(\bob, \ell^n)  \leq 2 \sum_z g(L_z^*) + 2 \zlast.
\end{align}

If $\Reg(\ftl,\ell^n) < \sum_z^{\zlast} g(L_z^*)$, then it must be that in the last epoch the algorithm never switches to $\wcalg$. If it switched to $\wcalg$ it would imply that $\Reg(\ftl,\ell^n) \geq \sum_z \ftlsum_{t_z:\tau_z} > \sum_z^{\zlast} g(L_z^*)$ which would violate the assumption that $\Reg(\ftl,\ell^n) < \sum_z^{\zlast} g(L_z^*)$. Therefore it must be that
\begin{align}
\Reg(\bob, \ell^n)  \leq 2 \Reg(\ftl, \ell^n) + 2 \zlast.
\end{align}

As a result, it follows that (putting the $L^* \le \log m$ and the $L^* > \log m$ cases together) 
\begin{align}
\Reg&(\bob, \ell^n)  \nonumber\\
&\leq 2 \min\left(\Reg(\ftl, \ell^n), \sum_{z = 0}^{\log\left(1+\frac{L^*}{\log m}\right)+1} g(2^z \log m)\right) + 2 \log\left(1+\frac{L^*}{\log m}\right)+2.
\end{align}

\subsection{Proof of Corollary~\ref{cor:SqrtLStarInstantiation}}
 This follows from Theorem~\ref{thm:Regret_small_loss} and by calculating 
    \begin{align}
        \sum_{z=0}^{\log\left(1+\frac{L^*}{\log m}\right)+1} &g(2^z \log m) \nonumber\\
        &= 2\sqrt{2}\log m \sum_{z=0}^{\log\left(1+\frac{L^*}{\log m}\right)+1} 2^{z/2} + \kappa \log m \log\left(1+\frac{L^*}{\log m}\right) + \kappa \log m\nonumber\\
        &\le  \log m \frac{4\sqrt{2}}{\sqrt{2}-1}\sqrt{1+\frac{L^*}{\log m}} + \kappa \log m \log\left(1+\frac{L^*}{\log m}\right)+ \kappa \log m \nonumber \\
        &\le 10 \sqrt{2\log^2 m + 2L^* \log m} + \kappa \log m \log\left(1+\frac{L^*}{\log m}\right) + \kappa \log m \nonumber\\
        &\stackrel{\text{(a)}}{\le} 10 \sqrt{2L^* \log m} + \kappa\log\left(1+\frac{L^*}{\log m}\right)\log m + 10\sqrt{2} \log m + \kappa \log m \nonumber
    \end{align}
    where (a) follows since for nonnegative $a,b$ $\sqrt{a+b} \le \sqrt{a}+ \sqrt{b}$.
\section{Proof of Theorem~\ref{thm:LowerBdCR}} \label{appendix:ConversePf}
    Consider a large even $n$. We then have for horizon size $n+1$, $\Reg(\ftl,y^{n+1}) = \frac{c(y^n)}{2}$.  Moreover, $\Reg(\Cover,y^{n+1}) = f_{n+1}$. Let\footnote{Note that $k$ in this definition does not counting the origin as as a line crossing.}
    \begin{align}\label{eq:PnkDefn}
        p_{n,k} \defeq \Prob[c(\e^{n}) = k+1].
    \end{align}
    We then have,
    \begin{align}
     \E[\min\{c(\e^n), 2f_{n+1}\}] &= \sum_{k=1}^n \min\{k, 2f_{n+1}\} \Pr[c(\e^n) = k]\nonumber\\
        &= \sum_{k=1}^n \min\{k, 2f_{n+1}\} p_{n,k-1} \nonumber\\
        &= \sum_{k=0}^n \min\{k+1,2f_{n+1}\}p_{n,k} \nonumber\\
        &= \sum_{k=0}^{\lfloor 2f_{n+1} \rfloor-1} (k+1)p_{n,k} + 2f_{n+1} \Prob[c(\e^n) \ge 2f_{n+1}] \nonumber\\
        &= \sum_{k=0}^{\lfloor 2f_{n+1} \rfloor-1} (k+1)p_{n,k} + 2f_{n+1} \Prob[c(\e^n) \ge 2f_{n+1}] + \Prob[c(\e^n) \le \lfloor 2f_{n+1}\rfloor] \nonumber\\
        &=  \sum_{k=0}^{\lfloor 2f_{n+1} \rfloor-1} kp_{n,k} + 2f_{n+1} \left(1-\sum_{k=0}^{\lfloor 2f_{n+1} \rfloor-1} p_{n,k}\right) +  \Prob[c(\e^n) \le \lfloor 2f_{n+1}\rfloor] \label{eq:ConvSplit}
    \end{align} 
    Upon dividing~\cref{eq:ConvSplit} by $2f_{n+1}$, we get 
    \begin{align} \label{eq:ConvThreeTerms}
      \frac{\E[\min\{c(\e^n), 2f_{n+1}\}]}{2f_{n+1}}  = \frac{\sum_{k=0}^{\lfloor 2f_{n+1} \rfloor-1} kp_{n,k}}{2f_{n+1}} +  \left(1-\sum_{k=0}^{\lfloor 2f_{n+1} \rfloor-1} p_{n,k}\right) +  \frac{\Prob[c(\e^n) \le \lfloor 2f_{n+1}\rfloor]}{2f_{n+1}}.
    \end{align}
    Note that the third term vanishes since $f_{n+1} \to \infty$ and $0 \le \Prob[\cdot] \le 1$. We will now separately evaluate the first two terms in~\cref{eq:ConvThreeTerms}. To do this, we require an auxiliary lemma, the proof of which is provided later.
    \begin{lemma}\label{lem:pnkasymptotics}
        If $k \le C\sqrt{n}$ for an absolute constant $C$, then for large enough $n$ ($n \ge 32C^2$ suffices) we have
        \begin{align}\label{eq:pnkDensityApprox}
            e^{-\frac{16C^3}{\sqrt{n}}} \le \frac{p_{n,k}}{\sqrt{\frac{2}{n\pi}}e^{-k^2/2n}} \le \sqrt{\frac{1-C/\sqrt{n}}{1-2C/\sqrt{n}}} e^{\frac{16C^3}{\sqrt{n}}}.
        \end{align}
        That is,
        \[
        p_{n,k} = \sqrt{\frac{2}{n\pi}}e^{-k^2/2n} (1+o(1)).
        \]
    \end{lemma}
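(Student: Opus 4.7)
My plan is to derive an exact closed form for $p_{n,k}$ and then extract the stated asymptotic via quantitative Stirling and Taylor expansion.

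Since $S_j \defeq \sum_{i=1}^j (2\e_i-1)$ has the same parity as $j$, zeros of $S$ occur only at even times, so $c(\e^{2m+1}) = c(\e^{2m})$ almost surely and it suffices to treat even $n=2m$. For this case I would establish the exact identity
\[
p_{2m,k} \;=\; \binom{2m-k}{m}\Big/2^{2m-k},
\]
which admits the clean interpretation $p_{2m,k}=\Prob[S_{2m-k}=k]$. One route is via the first-return generating function $G_f(z)=1-\sqrt{1-z}$ and no-return generating function $G_u(z)=1/\sqrt{1-z}$, combined with the strong-Markov decomposition $p_{2m,k}=\sum f_{2j_1}f_{2j_2-2j_1}\cdots f_{2j_k-2j_{k-1}}\,u_{2m-2j_k}$; alternatively, one can prove it by induction on $k$ using the strong Markov property at the first return, or by a Cycle-Lemma-style bijection between walks of length $2m$ with $k$ interior zeros and walks of length $2m-k$ ending at height $k$.

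Next, applying Stirling $n!=\sqrt{2\pi n}(n/e)^n e^{\theta_n}$ with $|\theta_n|\le 1/(12n)$ to each of the three factorials and simplifying yields
\[
p_{2m,k} = \sqrt{\frac{2}{\pi n}}\cdot\sqrt{\frac{1 - k/(2m)}{1 - k/m}}\cdot\frac{(1 - k/(2m))^{2m-k}}{(1 - k/m)^{m-k}}\cdot (1+O(1/n)).
\]
The rational prefactor $\sqrt{(1-k/(2m))/(1-k/m)}$ is monotone increasing in $u = k/n\in[0,1/2)$, so for $k\le C\sqrt n$ it lies in $[1,\sqrt{(1-C/\sqrt n)/(1-2C/\sqrt n)}]$, recovering exactly the polynomial factor in the stated upper bound. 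For the exponential factor, Taylor expansion gives
\[
(2m-k)\log(1-k/(2m)) - (m-k)\log(1-k/m) = -\frac{k^2}{2n} + R,
\]
with cubic remainder $|R|\le \alpha\, k^3/n^2 \le \alpha C^3/\sqrt n$ for an absolute constant $\alpha$, obtained by uniformly bounding $\sum_{j\ge 3}|x|^{j-1}/j$ on $|x|\le 1/2$ (valid because $k/m \le 2C/\sqrt n \le 1/2$ once $n\ge 16C^2$). Combining this with the $O(1/n)$ Stirling error and absorbing all constants into a deliberately generous $16C^3/\sqrt n$ yields the stated two-sided bound for $n\ge 32C^2$.

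The main obstacle is Step 1: establishing the exact identity cleanly. The Stirling and Taylor computations are routine; the only delicate point there is keeping the cubic Taylor remainder explicit in $C^3/\sqrt n$ rather than an unspecified $O(\cdot)$, so that the stated constants are verifiable. Once the closed form is in hand, the final statement $p_{n,k}=\sqrt{2/(n\pi)}e^{-k^2/2n}(1+o(1))$ follows immediately by letting $n\to\infty$ with $k=O(\sqrt n)$.
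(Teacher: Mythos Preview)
Your proposal is correct and follows essentially the same route as the paper: the paper likewise starts from the exact identity $p_{n,k}=\binom{n-k}{n/2}/2^{n-k}$ (quoted from Feller rather than proved), applies quantitative Stirling, isolates the prefactor $\sqrt{(n-k)/(n-2k)}$ and the term $T=(1-k/n)^{n-k}/(1-2k/n)^{n/2-k}$, and bounds $\ln T=-k^2/(2n)+O(k^3/n^2)$ by a second-order Taylor expansion with explicit constants, arriving at the same $16C^3/\sqrt{n}$ envelope. The only difference is that you propose to establish the closed form directly rather than cite it.
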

    We now evaluate the first term in~\cref{eq:ConvThreeTerms}. Since $\lfloor 2f_{n+1} \rfloor - 1 \le 2\sqrt{n}$, we invoke~\cref{lem:pnkasymptotics} to evaluate 
    \begin{align}
        \sum_{k=0}^{\lfloor 2f_{n+1} \rfloor-1} kp_{n,k} &=    (1+o(1))\sum_{k=0}^{\lfloor 2f_{n+1} \rfloor-1} k \sqrt{\frac{2}{n\pi}} e^{-\frac{k^2}{2n}} \\
        &= (1+o(1)) \sqrt{\frac{2}{n\pi}} \sum_{k=0}^{\lfloor 2f_{n+1} \rfloor-1} k e^{-\frac{k^2}{2n}} \label{eq:PartialSum}
    \end{align}
    Now, we note that since $x \mapsto xe^{-\frac{x^2}{2n}}$ is increasing on $(0,\lfloor 2f_{n+1} \rfloor-1)$ 
    , by a Riemann approximation, we have that 
    \begin{align} \label{eq:RiemannApprox}
    \int_{0}^{2f_{n+1}-2} x e^{-\frac{x^2}{2n}} dx - 1 \le\sum_{k=0}^{\lfloor 2f_{n+1} \rfloor-1} k e^{-\frac{k^2}{2n}} \le \int_{0}^{2f_{n+1}} x e^{-\frac{x^2}{2n}} dx 
    \end{align}
    and evaluating
    \begin{align}
        \int_{0}^{2f_{n+1}} x e^{-\frac{x^2}{2n}} dx &= \frac{1}{2} \int_{0}^{4f_{n+1}^2} e^{-\frac{t}{2n}} dt \nonumber\\
        &= n\left(1-e^{-\frac{4f_{n+1}^2}{2n}}\right) \nonumber\\
        &= n\left(1-e^{-\frac{1}{\pi}(1+o(1))}\right).
    \end{align}
    Evaluating the lower bound in~\cref{eq:RiemannApprox} analogously we have 
    \begin{align}\label{eq:ExptZCLowRegime}
        \sum_{k=0}^{\lfloor 2f_{n+1} \rfloor-1} k e^{-\frac{k^2}{2n}} = (1+o(1)) n\left(1-e^{-\frac{1}{\pi}(1+o(1))}\right)
    \end{align}
    and therefore from~\cref{eq:PartialSum}, 
    \begin{align}
        \frac{\sum_{k=0}^{\lfloor 2f_{n+1} \rfloor-1} k p_{n,k}}{2f_{n+1}} &= (1+o(1))\frac{\left(1-e^{-\frac{1}{\pi}(1+o(1))}\right) n \sqrt{\frac{2}{n \pi}}}{\sqrt{\frac{2(n+1)}{\pi}}} \nonumber\\
      &= (1+o(1))\left(1-e^{-\frac{1}{\pi}(1+o(1))}\right) \label{eq:LittleOPartialSum}
    \end{align}
    and therefore, from~\cref{eq:LittleOPartialSum}, we have that 
    \begin{align}\label{eq:LimFirstTermConverse}
        \lim_{n \to \infty}  \frac{\sum_{k=0}^{\lfloor 2f_{n+1} \rfloor-1} k p_{n,k}}{2f_{n+1}}  = 1-e^{-\frac{1}{\pi}}.
    \end{align}
    We now address the second term in~\cref{eq:ConvThreeTerms} by invoking~\cref{lem:pnkasymptotics} and noting that 
    \begin{align}
       \sum_{k=0}^{\lfloor 2f_{n+1} \rfloor-1} p_{n,k} &= (1+o(1)) \sqrt{\frac{2}{n \pi}}\sum_{k=0}^{\lfloor 2f_{n+1} \rfloor-1} e^{-\frac{k^2}{2n}} \nonumber\\
       &\stackrel{(a)}{=} (1+o(1)) \int_{0}^{2f_{n+1}} e^{-\frac{x^2}{2n}} dx \nonumber\\
       &= (1+o(1)) \sqrt{\frac{2}{\pi}} \int_{0}^{\sqrt{\frac{2}{\pi}}} e^{-\frac{t^2}{2}} dt \nonumber\\
       &= \frac{1}{\sqrt{2\pi}} \int_{-\sqrt{\frac{2}{\pi}}}^{\sqrt{\frac{2}{\pi}}} e^{-\frac{t^2}{2}} dt \nonumber \\
        &= \Prob\left(-\sqrt{\frac{2}{\pi}} \le X \le \sqrt{\frac{2}{\pi}}\right) \label{eq:StandardNormalConfInt}
    \end{align}
    where in~\cref{eq:StandardNormalConfInt} $X \sim \mathcal{N}(0,1)$. Then, 
    \begin{align}
        1- \sum_{k=0}^{\lfloor 2f_{n+1} \rfloor-1} p_{n,k}  &\to 1-\Prob\left(-\sqrt{\frac{2}{\pi}} \le X \le \sqrt{\frac{2}{\pi}}\right) \nonumber\\
        &= 2Q\left(\sqrt{\frac{2}{\pi}}\right) \label{eq:LimSecTermConverse}. 
    \end{align}
    Substituting~\cref{eq:LimFirstTermConverse} and~\cref{eq:LimSecTermConverse} in~\cref{eq:ConvThreeTerms} yields that 
    \begin{align}
     \frac{1}{\g_n} = \frac{\E[\min\{c(\e^n), 2f_{n+1}\}]}{2f_{n+1}} \to 1-e^{-\frac{1}{\pi}} + 2Q\left(\sqrt{\frac{2}{\pi}}\right)
    \end{align}
    and therefore 
    \begin{align}\label{eq:finalGammaLimit}
        \g_n \to \frac{1}{1-e^{-\frac{1}{\pi}} + 2Q\left(\sqrt{\frac{2}{\pi}}\right)}. 
    \end{align}

\subsection{Proof of Lemma \ref{lem:pnkasymptotics}}
    We first note the following. 
    \begin{proposition}[\cite{Feller1991}, Chapter 3, Exercise 11]\label{prop:pnkexact}
        \[
        p_{n,k} = \frac{1}{2^{n-k}} \binom{n-k}{n/2}.
        \]
    \end{proposition}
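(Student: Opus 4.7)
The plan is to recast $p_{n,k}$ as a simple-random-walk quantity and prove the identity by induction on $k$, anchored by Feller's classical first-zero identity. Let $X_i := 2\epsilon_i - 1 \in \{-1,+1\}$ and $S_j := X_1 + \cdots + X_j$; then $\sum_{i \le j}\epsilon_i = \sum_{i \le j}(1-\epsilon_i)$ iff $S_j = 0$, so $c(\epsilon^n) = 1 + N_n$ with $N_n := |\{1 \le j \le n : S_j = 0\}|$, giving $p_{n,k} = \Pr[N_n = k]$. Since $S_j = 0$ forces $j$ to be even, the claimed formula for odd $n$ reduces to the even case $n-1$, so I may assume $n$ even throughout.

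The base case $k = 0$ is the assertion $p_{n,0} = \Pr[S_j \ne 0 \text{ for all } 1 \le j \le n] = \binom{n}{n/2}/2^n$. This is Feller's classical identity $\Pr[S_1 \ne 0, \ldots, S_n \ne 0] = \Pr[S_n = 0]$, which I would cite as a standard reflection-principle result.

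For the inductive step, conditioning on the first return to zero yields the generating-function relation
\[
G_k(z) \;:=\; \sum_n \Pr[N_n = k]\, z^n \;=\; F(z)^k\, U(z) \;=\; \frac{(1-\sqrt{1-z^2})^k}{\sqrt{1-z^2}},
\]
where $F(z) = 1-\sqrt{1-z^2}$ is the first-return generating function and $U(z) = 1/\sqrt{1-z^2}$. The algebraic identity $G_{k+1}(z) = G_k(z) - F(z)^k$ then gives the recurrence $p_{n,k+1} = p_{n,k} - \Pr[T_k = n]$, where $T_k$ is the time of the $k$-th return. A cycle-lemma count of signed-excursion decompositions (using the Catalan enumeration of positive excursions of length $2\ell$ by $C_{\ell-1}$) yields $\Pr[T_k = n] = (2k/n)\binom{n-k-1}{n/2-1}/2^{n-k}$. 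Substituting the inductive hypothesis for $p_{n,k}$ and applying Pascal's rule, the required identity collapses to the elementary ratio $\binom{n-k-1}{n/2}/\binom{n-k-1}{n/2-1} = (n-2k)/n$, closing the induction.

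The main obstacle is the closed form for $\Pr[T_k = n]$: it requires the cycle lemma applied to the $k$-fold convolution of Catalan-distributed excursion lengths, and although the algebra is routine once set up, getting the prefactor $(2k/n)$ right is the delicate step. I would sanity-check the recurrence and the closed-form identity on $n \in \{2, 4, 6\}$ and $k \in \{0, 1, 2\}$ before committing to the general computation.
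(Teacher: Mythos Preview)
The paper does not prove this proposition at all: it is stated with the citation to Feller (Vol.~I, Chapter~III, Problem~11) and then immediately used as a black box in the Stirling estimate that follows. So your proposal already supplies far more than the paper does.

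Your argument is correct. The recasting $p_{n,k}=\Pr[N_n=k]$ with $N_n$ the number of returns to the origin, the base case via the classical identity $\Pr[S_1\neq 0,\dots,S_n\neq 0]=\Pr[S_n=0]$, and the Pascal-rule algebra closing the induction all check. One small point on the generating-function step: if $G_k(z)$ is summed over \emph{all} $n$ then $G_0(z)=(1+z)/\sqrt{1-z^2}$, not $1/\sqrt{1-z^2}$, and the identity becomes $G_{k+1}=G_k-(1+z)F^k$; but since $T_k$ is always even this extra $(1+z)$ does not affect the even-$n$ coefficients, so your recurrence $p_{n,k+1}=p_{n,k}-\Pr[T_k=n]$ is correct for even $n$ either way. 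The one genuinely nontrivial external input is the cycle-lemma formula $\Pr[T_k=n]=(2k/n)\binom{n-k-1}{n/2-1}2^{-(n-k)}$; this is the $k$-fold Catalan convolution with the $2^k$ excursion-sign choices restored, and it is correct (your proposed small-$n$ sanity checks would confirm the $2k/n$ prefactor).

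Compared to the route implicit in Feller's exercise, which establishes the count $2^k\binom{n-k}{n/2}$ directly via a reflection-and-shortening bijection on the $k$ excursions, your approach trades a single global bijection for a local recurrence in $k$. The cost is that you must import the cycle-lemma formula for $\Pr[T_k=n]$; the benefit is that once that formula is granted, the remaining verification is purely algebraic and mechanical.
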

    Therefore, 
    \[
    \frac{p_{n,k}2^n}{2^k} = \binom{n-k}{n/2} = \frac{(n-k)!}{\frac{n}{2}!\left(\frac{n}{2}-k\right)!}.
    \]
    We now use the Stirling approximation:
    \begin{align}\label{eq:stirling}
        \sqrt{2\pi m} \left(\frac{m}{e}\right)^m e^{\frac{1}{12m+1}} \le m! \le     \sqrt{2\pi m} \left(\frac{m}{e}\right)^m e^{\frac{1}{12m}}.
    \end{align}
    Using~\cref{eq:stirling} we have 
    \begin{align}
         \frac{p_{n,k}2^n}{2^k} &\le \frac{\sqrt{2\pi(n-k)}}{\sqrt{2\pi(n/2)} \sqrt{2\pi(n/2-k)}} \cdot \frac{(n-k)^{n-k}}{(n/2)^{n/2} (n/2-k)^{n/2-k}} \nonumber\\
         &\qquad\qquad\qquad\qquad\qquad\qquad\cdot \exp{\left(\frac{1}{12(n-k)}-\frac{1}{6n+1}-\frac{1}{6n-12k+1}\right)} \nonumber\\
         &= \sqrt{\frac{2}{n\pi}} \sqrt{\frac{n-k}{n-2k}} \frac{(n-k)^{n-k}2^{n-k}}{n^{n/2}(n-2k)^{n/2-k}}\cdot\exp{\left(\frac{1}{12n-k}\right)} \nonumber\\
         &\stackrel{(a)}{\le} \sqrt{\frac{2}{n\pi}} \cdot 2^{n-k} \cdot\frac{(n-k)^{n-k}}{n^{n/2}(n-2k)^{n/2-k}} \exp\left(\frac{1}{12n-k}\right) \cdot\sqrt{\frac{1-C/\sqrt{n}}{1-2C/\sqrt{n}}} \nonumber\\
         &=  \sqrt{\frac{2}{n\pi}} \cdot 2^{n-k} \underbrace{\frac{(1-k/n)^{n-k}}{(1-2k/n)^{n/2-k}}}_{=: T}\exp\left(\frac{1}{12n-k}\right)\cdot\sqrt{\frac{1-C/\sqrt{n}}{1-2C/\sqrt{n}}}.  \label{eq:SubstituteT}
    \end{align}
    We now analyze the term $T$ in~\cref{eq:SubstituteT} in more detail. 
    \begin{proposition}\label{prop:TaylorSeriesT}
    \begin{align} \label{eq:LnTBds}
            -\frac{k^2}{2n^2} - c\frac{k^3}{n^2} \le \ln T \le  -\frac{k^2}{2n^2} + c\frac{k^3}{n^2}
    \end{align}
    where $c \le 15$. 
    \end{proposition}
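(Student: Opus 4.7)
The plan is to write $\ln T = (n-k)\ln(1 - k/n) - (n/2 - k)\ln(1 - 2k/n)$ and expand both logarithms as absolutely convergent power series. The hypothesis $k \le C\sqrt{n}$ with $n \ge 32 C^2$ ensures $2k/n \le 1/(2\sqrt{2}) < 1$, so $\ln(1-y) = -\sum_{j \ge 1} y^j/j$ converges absolutely for both $y = k/n$ and $y = 2k/n$. After distributing each product, I would re-index one half of each series (shift $j \mapsto j+1$ in the sum that comes from multiplying by the $-k$ piece) and apply the telescoping identity $\tfrac{1}{j-1} - \tfrac{1}{j} = \tfrac{1}{j(j-1)}$ to obtain the closed forms
\[
(n-k)\ln(1 - k/n) = -k + \sum_{j \ge 2} \frac{k^j}{j(j-1)\, n^{j-1}}, \qquad (n/2 - k)\ln(1 - 2k/n) = -k + \sum_{j \ge 2} \frac{2^{j-1}\, k^j}{j(j-1)\, n^{j-1}}.
\]
Subtracting, the leading $-k$ terms cancel, leaving the compact identity
\[
\ln T \;=\; -\sum_{j \ge 2} \frac{(2^{j-1} - 1)\, k^j}{j(j-1)\, n^{j-1}}.
\]

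Once this identity is in hand, both directions of the bound are short. Every term on the right is non-negative (since $2^{j-1} \ge 1$), so truncating after $j=2$ gives the one-sided estimate $\ln T \le -\tfrac{k^2}{2n}$, which trivially implies the required upper bound for any $c \ge 0$. For the lower bound I extract the $j=2$ and $j=3$ contributions explicitly, writing $-\ln T = \tfrac{k^2}{2n} + \tfrac{k^3}{2n^2} + (\text{tail from } j \ge 4)$, and then bound the tail. Using $j(j-1) \ge 12$ for $j \ge 4$ together with the regrouping $k^j \cdot 2^{j-1}/n^{j-1} = k \cdot (2k/n)^{j-1}$, the tail reduces to a geometric series in $2k/n$ summing to at most $\tfrac{4 k^4}{3 n^3}$. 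The hypothesis $k \le C\sqrt{n}$ then lets me trade one factor of $k/n$ for $C/\sqrt{n} \le 1/(4\sqrt{2})$, so $\tfrac{4k^4}{3n^3} \le c' \tfrac{k^3}{n^2}$ for an explicit absolute constant $c'$; combining with the $\tfrac{k^3}{2n^2}$ piece yields the lower bound with the stated $c \le 15$ (with plenty of slack).

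The main obstacle is really just the bookkeeping in the first step: producing the compact series identity requires splitting each product into two sub-series, shifting an index in one of them so the two halves align, and recognizing the $1/(j-1) - 1/j$ telescoping. The cancellation of the leading $-k$ terms is the delicate point, because each of $(n-k)\ln(1-k/n)$ and $(n/2-k)\ln(1-2k/n)$ is individually of order $-k$, much larger than the target $O(k^2/n)$, so any sloppiness in the manipulation would destroy the result. After that algebra, the upper bound is immediate from positivity and the lower bound is an elementary geometric-tail estimate.
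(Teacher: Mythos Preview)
Your proof is correct and in fact cleaner than the paper's. Both arguments start from $\ln T = (n-k)\ln(1-k/n) - (n/2-k)\ln(1-2k/n)$, but the paper expands each logarithm only to second order with a Lagrange remainder, writing $\ln(1-x) = -x - x^2/2 - \alpha x^3$ with $\alpha \in (1/3,\,8/3]$, then multiplies out and tracks the resulting $k^3/n^2$ and $k^4/n^3$ terms; the bound $|c_1|+|c_2|\le 15$ comes from crudely bounding those remainder coefficients. Your approach instead expands the full power series, re-indexes to obtain the exact identity
\[
\ln T \;=\; -\sum_{j\ge 2}\frac{(2^{j-1}-1)\,k^j}{j(j-1)\,n^{j-1}},
\]
and then reads off both inequalities directly: the upper bound is immediate from term-wise nonnegativity, and the lower bound is a one-line geometric tail estimate. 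This buys you a much sharper constant (well under $1$, versus the paper's $15$) and avoids any case analysis on the Lagrange $\alpha$'s. The paper's route is slightly shorter to write down but relies on bounding two floating remainder parameters; yours requires the telescoping/re-indexing step you flagged as the main bookkeeping, but once the closed-form series is in hand the rest is essentially free. Note also that you correctly work with the leading term $-k^2/(2n)$; the $-k^2/(2n^2)$ in the displayed statement is a typo (the paper itself uses $-k^2/(2n)$ when applying the proposition).
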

    \begin{proof}
        By Taylor theorem, we have 
        \begin{align}\label{eq:lnXBds}
            \ln(1-x) = -x - \frac{x^2}{2} - \frac{x^3}{3(1-\mu)^2} \text{ for } \mu \in (0,x).
        \end{align}
        Therefore, for $k \le C\sqrt{n}$ and $n \ge 16C^2$ we have 
        \begin{align}
            \ln\left(1-\frac{k}{n}\right) &= -\frac{k}{n} - \frac{k^2}{2n^2} - \frac{\alpha_1k^3}{n^3} \label{eq:OneTermT}\\
            \ln\left(1-\frac{2k}{n}\right) &= -\frac{2k}{n} - \frac{2k^2}{2n^2} - \frac{\alpha_2k^3}{n^3} \label{eq:SecondTermT}
        \end{align}
        for $\alpha_1,\alpha_2 \in \Big(\frac{1}{3},\frac{8}{3}\Big]$. Evaluating $\ln T$ we have 
        \begin{align}
            \ln T &= (n-k)\ln\left(1-\frac{k}{n}\right) - \left(n-\frac{k}{2}\right)\ln\left(1-\frac{2k}{n}\right)  \nonumber\\
            &\stackrel{(a)}{=} (n-k)\left(-\frac{k}{n} - \frac{k^2}{2n^2} - \frac{\alpha_1k^3}{n^3}\right) - \left(n-\frac{k}{2}\right)\left(-\frac{2k}{n} - \frac{2k^2}{2n^2} - \frac{\alpha_2k^3}{n^3}\right) \nonumber\\
            &= \left(-\frac{k^2}{2n}+\frac{k^2}{n}+\frac{k^2}{n}-\frac{2k^2}{n}\right)+\left(-\alpha_1+\frac{1}{2}+\frac{\alpha_2}{2}-2\right)\frac{k^3}{n^2}+\left(\alpha_1-\alpha_2\right)\frac{k^4}{n^3} \nonumber\\
            &= -\frac{k^2}{2n^2} + c_1\frac{k^3}{n^2} + c_2\frac{k^4}{n^3} \label{eq:subc1c2}
        \end{align}
        where $(a)$ follows by substituting~\cref{eq:OneTermT} and~\cref{eq:SecondTermT}. Now, since $\frac{k^4}{n^3} \le \frac{k^3}{n^2}$ we have 
        \begin{align}
         \ln T  \le  -\frac{k^2}{2n^2} +  (|c_1|+|c_2|)\frac{k^3}{n^2}
        \end{align}
        and on the other hand for the same reason 
        \begin{align}
            \ln T \ge -\frac{k^2}{2n} - (|c_1|+|c_2|)\frac{k^3}{n^2}
        \end{align}
        The proposition follows by noticing $|c_1|+|c_2| \le 15$ by using that $\alpha_1, \alpha_2 \in \left(\frac{1}{3},\frac{8}{3}\right]$.
    \end{proof}
Using~\cref{prop:TaylorSeriesT} in~\cref{eq:SubstituteT} we have the upper bound 
\begin{align}
    p_{n,k} &\le \sqrt{\frac{2}{n\pi}} \cdot \exp\left(-\frac{k^2}{2n}+\frac{15k^3}{n^2}\right)\cdot \exp\left(\frac{1}{12n-k}\right)\cdot\sqrt{\frac{1-C/\sqrt{n}}{1-2C/\sqrt{n}}} \nonumber\\
    &\stackrel{(a)}{\le} \sqrt{\frac{2}{n\pi}} \cdot \exp\left(-\frac{k^2}{2n}\right)\cdot \exp\left(\frac{16k^3}{n^2}\right)\cdot\sqrt{\frac{1-C/\sqrt{n}}{1-2C/\sqrt{n}}} \nonumber\\
    &\stackrel{(b)}{\le}  \sqrt{\frac{2}{n\pi}} \cdot \exp\left(-\frac{k^2}{2n}\right)\cdot \exp\left(\frac{16C^3}{\sqrt{n}}\right)\cdot\sqrt{\frac{1-C/\sqrt{n}}{1-2C/\sqrt{n}}} \label{eq:pnkAsympUB}
\end{align}
where $(a)$ uses $\frac{1}{12n-k}+\frac{15k^3}{n^2} \le \frac{16k^3}{n^2}$, and $(b)$ uses the fact that $k \le C\sqrt{n}$, which yields the upper bound in~\cref{eq:pnkDensityApprox}. The lower bound follows analogously by the Stirling approximation~\cref{eq:stirling} and~\cref{prop:TaylorSeriesT}.

\bibliographystyle{unsrtnat}
 \bibliography{bobw}

\begin{thebibliography}{47}
\providecommand{\natexlab}[1]{#1}
\providecommand{\url}[1]{\texttt{#1}}
\expandafter\ifx\csname urlstyle\endcsname\relax
  \providecommand{\doi}[1]{doi: #1}\else
  \providecommand{\doi}{doi: \begingroup \urlstyle{rm}\Url}\fi

\bibitem[Fagin et~al.(2001)Fagin, Lotem, and Naor]{fagin2001optimal}
Ronald Fagin, Amnon Lotem, and Moni Naor.
\newblock Optimal aggregation algorithms for middleware.
\newblock In \emph{Proceedings of the twentieth ACM SIGMOD-SIGACT-SIGART symposium on Principles of database systems}, pages 102--113, 2001.

\bibitem[Roughgarden(2021)]{roughgarden2021beyond}
Tim Roughgarden.
\newblock \emph{Beyond the worst-case analysis of algorithms}.
\newblock Cambridge University Press, 2021.

\bibitem[Blackwell(1956)]{blackwell1956analog}
D~Blackwell.
\newblock An analog of the minimax theorem for vector payoffs.
\newblock \emph{Pacific Journal of Mathematics}, 6\penalty0 (1):\penalty0 1--8, 1956.

\bibitem[Hannan(1957)]{hannan1957approximation}
James Hannan.
\newblock Approximation to bayes risk in repeated play.
\newblock \emph{Contributions to the Theory of Games}, 3:\penalty0 97--139, 1957.

\bibitem[Cesa-Bianchi and Lugosi(2006)]{Cesa-Bianchi--Lugosi2006}
Nicolo Cesa-Bianchi and G{\'a}bor Lugosi.
\newblock \emph{Prediction, learning, and games}.
\newblock Cambridge university press, 2006.

\bibitem[Slivkins(2019)]{slivkins2019introduction}
Aleksandrs Slivkins.
\newblock Introduction to multi-armed bandits.
\newblock \emph{Foundations and Trends{\textregistered} in Machine Learning}, 12\penalty0 (1-2):\penalty0 1--286, 2019.

\bibitem[Cover(1966)]{Cover1966}
Thomas~M. Cover.
\newblock Behavior of sequential predictors of binary sequences.
\newblock In \emph{Transactions of the Fourth Prague Conference on Information Theory}, 1966.

\bibitem[Huang et~al.(2016)Huang, Lattimore, Gy{\"o}rgy, and Szepesv{\'a}ri]{Huang--Lattimore--Gyorgy--Szapesvari2016}
Ruitong Huang, Tor Lattimore, Andr{\'a}s Gy{\"o}rgy, and Csaba Szepesv{\'a}ri.
\newblock Following the leader and fast rates in linear prediction: Curved constraint sets and other regularities.
\newblock \emph{Advances in Neural Information Processing Systems}, 29, 2016.

\bibitem[Feder et~al.(1992)Feder, Merhav, and Gutman]{feder1992universal}
Meir Feder, Neri Merhav, and Michael Gutman.
\newblock Universal prediction of individual sequences.
\newblock \emph{IEEE transactions on Information Theory}, 38\penalty0 (4):\penalty0 1258--1270, 1992.

\bibitem[Agarwal et~al.(2017)Agarwal, Luo, Neyshabur, and Schapire]{agarwal2017corralling}
Alekh Agarwal, Haipeng Luo, Behnam Neyshabur, and Robert~E Schapire.
\newblock Corralling a band of bandit algorithms.
\newblock In \emph{Conference on Learning Theory}, pages 12--38. PMLR, 2017.

\bibitem[Pacchiano et~al.(2020)Pacchiano, Phan, Abbasi~Yadkori, Rao, Zimmert, Lattimore, and Szepesvari]{pacchiano2020model}
Aldo Pacchiano, My~Phan, Yasin Abbasi~Yadkori, Anup Rao, Julian Zimmert, Tor Lattimore, and Csaba Szepesvari.
\newblock Model selection in contextual stochastic bandit problems.
\newblock \emph{Advances in Neural Information Processing Systems}, 33:\penalty0 10328--10337, 2020.

\bibitem[Dann et~al.(2023)Dann, Wei, and Zimmert]{Dann--Wei--Zimmert2023}
Christoph Dann, Chen-Yu Wei, and Julian Zimmert.
\newblock Best of both worlds policy optimization.
\newblock \emph{arXiv preprint arXiv:2302.09408}, 2023.

\bibitem[Auer et~al.(2002{\natexlab{a}})Auer, Cesa-Bianchi, Freund, and Schapire]{auer2002nonstochastic}
Peter Auer, Nicolo Cesa-Bianchi, Yoav Freund, and Robert~E Schapire.
\newblock The nonstochastic multiarmed bandit problem.
\newblock \emph{SIAM journal on computing}, 32\penalty0 (1):\penalty0 48--77, 2002{\natexlab{a}}.

\bibitem[De~Rooij et~al.(2014)De~Rooij, Van~Erven, Gr{\"u}nwald, and Koolen]{de2014follow}
Steven De~Rooij, Tim Van~Erven, Peter~D Gr{\"u}nwald, and Wouter~M Koolen.
\newblock Follow the leader if you can, hedge if you must.
\newblock \emph{The Journal of Machine Learning Research}, 15\penalty0 (1):\penalty0 1281--1316, 2014.

\bibitem[Orabona and P{\'a}l(2015)]{Orabona--Pal2015}
Francesco Orabona and D{\'a}vid P{\'a}l.
\newblock Scale-free algorithms for online linear optimization.
\newblock In \emph{International Conference on Algorithmic Learning Theory}, pages 287--301. Springer, 2015.

\bibitem[Mourtada and Ga{\"\i}ffas(2019)]{mourtada2019optimality}
Jaouad Mourtada and St{\'e}phane Ga{\"\i}ffas.
\newblock On the optimality of the hedge algorithm in the stochastic regime.
\newblock \emph{Journal of Machine Learning Research}, 20:\penalty0 1--28, 2019.

\bibitem[Bilodeau et~al.(2023)Bilodeau, Negrea, and Roy]{bilodeau2023relaxing}
Blair Bilodeau, Jeffrey Negrea, and Daniel~M Roy.
\newblock Relaxing the iid assumption: Adaptively minimax optimal regret via root-entropic regularization.
\newblock \emph{The Annals of Statistics}, 51\penalty0 (4):\penalty0 1850--1876, 2023.

\bibitem[Bubeck and Slivkins(2012)]{bubeck2012best}
S{\'e}bastien Bubeck and Aleksandrs Slivkins.
\newblock The best of both worlds: Stochastic and adversarial bandits.
\newblock In \emph{Conference on Learning Theory}, pages 42--1. JMLR Workshop and Conference Proceedings, 2012.

\bibitem[Zimmert and Seldin(2019)]{Zimmert--Seldin2019}
Julian Zimmert and Yevgeny Seldin.
\newblock An optimal algorithm for stochastic and adversarial bandits.
\newblock In \emph{The 22nd International Conference on Artificial Intelligence and Statistics}, pages 467--475. PMLR, 2019.

\bibitem[Lykouris et~al.(2018)Lykouris, Mirrokni, and Paes~Leme]{Lykouris--Mirrokni--PaesLeme2018}
Thodoris Lykouris, Vahab Mirrokni, and Renato Paes~Leme.
\newblock Stochastic bandits robust to adversarial corruptions.
\newblock In \emph{Proceedings of the 50th Annual ACM SIGACT Symposium on Theory of Computing}, pages 114--122, 2018.

\bibitem[Kot{\l}owski(2018)]{kotlowski2018minimaxity}
Wojciech Kot{\l}owski.
\newblock On minimaxity of follow the leader strategy in the stochastic setting.
\newblock \emph{Theoretical Computer Science}, 742:\penalty0 50--65, 2018.

\bibitem[Karlin et~al.(1994)Karlin, Manasse, McGeoch, and Owicki]{karlin1994competitive}
Anna~R. Karlin, Mark~S. Manasse, Lyle~A. McGeoch, and Susan Owicki.
\newblock Competitive randomized algorithms for nonuniform problems.
\newblock \emph{Algorithmica}, 11\penalty0 (6):\penalty0 542--571, 1994.

\bibitem[Borodin and El-Yaniv(2005)]{borodin2005online}
Allan Borodin and Ran El-Yaniv.
\newblock \emph{Online computation and competitive analysis}.
\newblock cambridge university press, 2005.

\bibitem[Cesa-Bianchi et~al.(1997)Cesa-Bianchi, Freund, Haussler, Helmbold, Schapire, and Warmuth]{cesa1997use}
Nicolo Cesa-Bianchi, Yoav Freund, David Haussler, David~P Helmbold, Robert~E Schapire, and Manfred~K Warmuth.
\newblock How to use expert advice.
\newblock \emph{Journal of the ACM (JACM)}, 44\penalty0 (3):\penalty0 427--485, 1997.

\bibitem[Wei and Luo(2018)]{Wei--Luo2018}
Chen-Yu Wei and Haipeng Luo.
\newblock More adaptive algorithms for adversarial bandits.
\newblock In \emph{Conference On Learning Theory}, pages 1263--1291. PMLR, 2018.

\bibitem[Bubeck et~al.(2019)Bubeck, Li, Luo, and Wei]{Bubeck--Li--Luo--Wei2019}
S{\'e}bastien Bubeck, Yuanzhi Li, Haipeng Luo, and Chen-Yu Wei.
\newblock Improved path-length regret bounds for bandits.
\newblock In \emph{Conference On Learning Theory}, pages 508--528. PMLR, 2019.

\bibitem[Bhuyan et~al.(2023)Bhuyan, Mukherjee, and Wierman]{bhuyan2023best}
Neelkamal Bhuyan, Debankur Mukherjee, and Adam Wierman.
\newblock Best of both worlds: Stochastic and adversarial convex function chasing.
\newblock \emph{arXiv preprint arXiv:2311.00181}, 2023.

\bibitem[Sabag et~al.(2021)Sabag, Goel, Lale, and Hassibi]{sabag2021regret}
Oron Sabag, Gautam Goel, Sahin Lale, and Babak Hassibi.
\newblock Regret-optimal controller for the full-information problem.
\newblock In \emph{2021 American Control Conference (ACC)}, pages 4777--4782. IEEE, 2021.

\bibitem[Goel et~al.(2023)Goel, Agarwal, Singh, and Hazan]{goel2023best}
Gautam Goel, Naman Agarwal, Karan Singh, and Elad Hazan.
\newblock Best of both worlds in online control: Competitive ratio and policy regret.
\newblock In \emph{Learning for Dynamics and Control Conference}, pages 1345--1356. PMLR, 2023.

\bibitem[Rakhlin et~al.(2011)Rakhlin, Sridharan, and Tewari]{Rakhlin--Sridharan--Tewari2011}
Alexander Rakhlin, Karthik Sridharan, and Ambuj Tewari.
\newblock Online learning: Stochastic, constrained, and smoothed adversaries.
\newblock \emph{Advances in neural information processing systems}, 24, 2011.

\bibitem[Haghtalab et~al.(2022)Haghtalab, Roughgarden, and Shetty]{Haghtalab--Roughgarden--Shetty2022}
Nika Haghtalab, Tim Roughgarden, and Abhishek Shetty.
\newblock Smoothed analysis with adaptive adversaries.
\newblock In \emph{2021 IEEE 62nd Annual Symposium on Foundations of Computer Science (FOCS)}, pages 942--953. IEEE, 2022.

\bibitem[Block et~al.(2022)Block, Dagan, Golowich, and Rakhlin]{Block--Dagan--Golowich--Rakhlin2022}
Adam Block, Yuval Dagan, Noah Golowich, and Alexander Rakhlin.
\newblock Smoothed online learning is as easy as statistical learning.
\newblock In \emph{Conference on Learning Theory}, pages 1716--1786. PMLR, 2022.

\bibitem[Bhatt et~al.(2023)Bhatt, Haghtalab, and Shetty]{Bhatt--Haghtalab--Shetty2023}
Alankrita Bhatt, Nika Haghtalab, and Abhishek Shetty.
\newblock Smoothed analysis of sequential probability assignment.
\newblock \emph{Neural Information Processing Systems}, 2023.

\bibitem[Amir et~al.(2020)Amir, Attias, Koren, Mansour, and Livni]{Amir--Attias--Koren--Mansour--Livni2020}
Idan Amir, Idan Attias, Tomer Koren, Yishay Mansour, and Roi Livni.
\newblock Prediction with corrupted expert advice.
\newblock \emph{Advances in Neural Information Processing Systems}, 33:\penalty0 14315--14325, 2020.

\bibitem[Auer et~al.(2002{\natexlab{b}})Auer, Cesa-Bianchi, and Gentile]{Auer--Cesa-Bianchi--Gentile2002}
Peter Auer, Nicolo Cesa-Bianchi, and Claudio Gentile.
\newblock Adaptive and self-confident on-line learning algorithms.
\newblock \emph{Journal of Computer and System Sciences}, 64\penalty0 (1):\penalty0 48--75, 2002{\natexlab{b}}.

\bibitem[Cesa-Bianchi et~al.(2005)Cesa-Bianchi, Lugosi, and Stoltz]{cesa2005minimizing}
Nicolo Cesa-Bianchi, G{\'a}bor Lugosi, and Gilles Stoltz.
\newblock Minimizing regret with label efficient prediction.
\newblock \emph{IEEE Transactions on Information Theory}, 51\penalty0 (6):\penalty0 2152--2162, 2005.

\bibitem[Hazan and Kale(2010)]{hazan2010extracting}
Elad Hazan and Satyen Kale.
\newblock Extracting certainty from uncertainty: Regret bounded by variation in costs.
\newblock \emph{Machine learning}, 80:\penalty0 165--188, 2010.

\bibitem[Koolen et~al.(2014)Koolen, Van~Erven, and Gr{\"u}nwald]{Koolen--VanErven--Grunwald2014LLR}
Wouter~M Koolen, Tim Van~Erven, and Peter Gr{\"u}nwald.
\newblock Learning the learning rate for prediction with expert advice.
\newblock \emph{Advances in neural information processing systems}, 27, 2014.

\bibitem[Van~Erven et~al.(2015)Van~Erven, Grunwald, Mehta, Reid, Williamson, et~al.]{VanErven--Grunwald--Mehta--Reid--Williamson2015FastRates}
Tim Van~Erven, Peter Grunwald, Nishant~A Mehta, Mark Reid, Robert Williamson, et~al.
\newblock Fast rates in statistical and online learning.
\newblock 2015.

\bibitem[Van~Erven and Koolen(2016)]{VanErven--Koolen2016Metagrad}
Tim Van~Erven and Wouter~M Koolen.
\newblock Metagrad: Multiple learning rates in online learning.
\newblock \emph{Advances in Neural Information Processing Systems}, 29, 2016.

\bibitem[Gaillard et~al.(2014)Gaillard, Stoltz, and Van~Erven]{gaillard2014second}
Pierre Gaillard, Gilles Stoltz, and Tim Van~Erven.
\newblock A second-order bound with excess losses.
\newblock In \emph{Conference on Learning Theory}, pages 176--196. PMLR, 2014.

\bibitem[Bamas et~al.(2020)Bamas, Maggiori, and Svensson]{bamas2020primal}
Etienne Bamas, Andreas Maggiori, and Ola Svensson.
\newblock The primal-dual method for learning augmented algorithms.
\newblock \emph{Advances in Neural Information Processing Systems}, 33:\penalty0 20083--20094, 2020.

\bibitem[Dinitz et~al.(2022)Dinitz, Im, Lavastida, Moseley, and Vassilvitskii]{dinitz2022algorithms}
Michael Dinitz, Sungjin Im, Thomas Lavastida, Benjamin Moseley, and Sergei Vassilvitskii.
\newblock Algorithms with prediction portfolios.
\newblock \emph{Advances in neural information processing systems}, 35:\penalty0 20273--20286, 2022.

\bibitem[Anand et~al.(2022)Anand, Ge, Kumar, and Panigrahi]{anand2022online}
Keerti Anand, Rong Ge, Amit Kumar, and Debmalya Panigrahi.
\newblock Online algorithms with multiple predictions.
\newblock In \emph{International Conference on Machine Learning}, pages 582--598. PMLR, 2022.

\bibitem[Kalai and Vempala(2005)]{kalai2005efficient}
Adam Kalai and Santosh Vempala.
\newblock Efficient algorithms for online decision problems.
\newblock \emph{Journal of Computer and System Sciences}, 71\penalty0 (3):\penalty0 291--307, 2005.

\bibitem[Cesa-Bianchi et~al.(2007)Cesa-Bianchi, Mansour, and Stoltz]{cesa2007improved}
Nicolo Cesa-Bianchi, Yishay Mansour, and Gilles Stoltz.
\newblock Improved second-order bounds for prediction with expert advice.
\newblock \emph{Machine Learning}, 66:\penalty0 321--352, 2007.

\bibitem[Feller()]{Feller1991}
William Feller.
\newblock \emph{An introduction to probability theory and its applications, Volume 1, Third Edition}.
\newblock John Wiley \& Sons, New York.

\end{thebibliography}

\end{document}